% necessary to inhibit amsmath warning caused by definition of \vec in llncs
\documentclass[runningheads,a4paper,envcountsame,oribibl]{llncs}

\usepackage{calculi+algebra}

% \svnidlong
% {$LastChangedBy: till $}
% {$LastChangedRevision: 16112 $}
% {$LastChangedDate: 2013-03-11 17:02:51 +0100 (Mo, 11 Mär 2013) $}
% {$HeadURL: http://carkowsky.informatik.uni-bremen.de/svn-r3/qshape/trunk/papers/QSR-survey/cosit2013/calculi+algebra.tex $}

\addtolength{\belowdisplayskip}{-20pt} 
\addtolength{\belowdisplayshortskip}{-20pt}
\addtolength{\abovedisplayskip}{-20pt} 
\addtolength{\abovedisplayshortskip}{-20pt}
\addtolength{\belowcaptionskip}{-10pt} 

\begin{document}

\title{Algebraic Properties of Qualitative Spatio-Temporal Calculi\thanks{The final publication is available at \url{http://link.springer.com}; this version contains supplementary material}}%\thanks{Rev.~\svnrev; Last changed by \svnauthor\ on \svndate; URL: \url{\svnkw{HeadURL}}}}

\author{Frank Dylla\inst{1}, Till Mossakowski\inst{1,2} \and Thomas Schneider\inst{3} \and Diedrich Wolter\inst{1}}
\institute{%
  Collaborative Research Center on Spatial Cognition (SFB/TR 8), Univ. of Bremen\quad \url{{dylla,till,dwolter}@informatik.uni-bremen.de}%
  \and
  DFKI GmbH, Bremen\quad \url{Till.Mossakowski@dfki.de}
  \and
  Department of Mathematics and Computer Science, University of Bremen\quad \url{tschneider@informatik.uni-bremen.de}
}
% \institute{%
%   Collaborative Research Center on Spatial Cognition (SFB/TR 8), Univ. of Bremen
%   \and
%   DFKI GmbH, Bremen
%   \and
%   Department of Mathematics and Computer Science, University of Bremen\quad
%   \url{{dylla,till,tschneider,dwolter}@informatik.uni-bremen.de}%
% }

\maketitle

%%% ABSTRACT
\begin{abstract}
 Qualitative spatial and temporal reasoning is based on so-called qualitative calculi.
 Algebraic properties of these calculi have several implications on reasoning algorithms.
 But what exactly is a qualitative calculus? And to which extent do the qualitative
 calculi proposed meet these demands?
 The literature provides various answers to the first question but only few facts about
 the second. 
 In this paper we identify the  minimal requirements to binary spatio-temporal calculi 
 and we discuss the relevance of the according axioms for representation and reasoning.
 We also analyze existing qualitative calculi and provide a classification involving 
 different notions of relation algebra.
 
\end{abstract}

\section{Introduction}
Qualitative spatial and temporal reasoning is a sub-field of knowledge representation involved with representations of spatial and temporal domains that are based on finite sets of so-called qualitative relations. 
Qualitative relations serve to explicate knowledge relevant for a task at hand while at the same time they abstract from irrelevant knowledge. 
Often, these relations aim to relate to cognitive concepts.
Qualitative spatial and temporal reasoning thus link cognitive approaches to knowledge representation with formal methods. 
Computationally, qualitative spatial and temporal reasoning is largely involved with constraint satisfaction problems over infinite domains where qualitative relations serve as constraints.
Typical domains include, on the temporal side, points and intervals and, on the spatial side, regions or oriented points in the Euclidean plane. 
In the past decades, a vast number of qualitative representations have been developed that are commonly referred to as qualitative calculi (see \cite{Ligozat11} for a recent overview).
Yet the literature provides us with several definitions of what a qualitative calculus exactly is.
Nebel and Scivos \cite{DBLP:journals/ki/NebelS02} have introduced the
rather general and weak notion of a \emph{constraint algebra}, which is
a set of relations closed under converse, finite intersection, and
composition. Ligozat and Renz \cite{LigozatR04} focus on so-called
non-associative algebras, which are relation algebras without
associativity axioms, and which have a much richer structure.  Both
approaches assume that the converse operation is \emph{strong}, which
is not the case for calculi like the 
Cardinal Direction (Relations) Calculus (CDR) \cite{DBLP:journals/ai/SkiadopoulosK05}
or its recently introduced rectangular variant (RDR) \cite{NavarreteEtAl13}.  

The goal of this paper is to relate the existing definitions and to identify the essential representation-theoretic properties that characterize a qualitative calculus.
It is achieved by the following contributions:
\begin{itemize}
  \item
    We propose a definition of a qualitative calculus that includes existing spatio-temporal calculi
    by weakening the conditions usually imposed on the converse and composition relation
    (Section 2).
  \item
    We generalize the notions of constraint algebra and non-associative algebra
    to cover calculi with weak converse (Section 3.1).
  \item
    We discuss the role of algebraic properties of calculi
    for spatial reasoning,
    especially in connection with general-purpose reasoning tools like SparQ
    \cite{sparqmanual,WFW+06} and GQR \cite{r4:aaai08ws} (Section 3.2).
  \item
    We experimentally evaluate the algebraic properties of calculi
    and derive a reasoning procedure that is sensitive to these properties
    (Section 4).
  \item
    We examine information preservation properties of calculi during reasoning,
    i.e., how general relations evolve after several compositions (Section 5).
\end{itemize}

\section{Qualitative Representations}

In this section, we formulate minimal requirements to a qualitative calculus,
discuss their relevance to spatio-temporal representation and reasoning,
and list existing calculi.
We restrict ourselves to calculi with binary relations
because we want to examine their algebraic properties
using the notion of a relation algebra,
which is best understood for binary relations.

\subsection{Requirements to Qualitative Representations}
\label{sec:requirements}

We start with minimal requirements used in the literature.
Let us first fix some notation. Let $r,s,t$ range over binary relations over a non-empty universe $\Univ$, i.e., $r \subseteq \Univ \times \Univ$.
We use $\cup$, $\cap$, $\bar{~}$, $\breve{~}$ and $\circ$ to denote the union, intersection, complement, converse, and composition of relations,
as well as the identity and universal relations $\id = \{(u,u) \mid u \in \Univ\}$ and $\univ = \Univ \times \Univ$.
A relation $r \subseteq \Univ \times \Univ$ is called \emph{serial}
if, for every $u \in \Univ$, there is some $v \in \Univ$ such that $(u,v) \in r$.

Ligozat and Renz \cite{LigozatR04} note that most spatial and temporal calculi
are based on a set of JEPD (jointly exhaustive and pairwise disjoint) relations.
%In case of binary calculi JEPD means that for any pair of objects exactly one relation holds. %TM: removed since this is repeated below.
The following definition is standard in the QSR literature \cite{LigozatR04,Cohn:2008vn}.
\begin{defi}
  \label{def:JEPD+partition_schemes}
  Let $\Univ$ be a non-empty universe and $\URel$ a set of non-empty binary relations over $\Univ$.
  $\URel$ is called a set of \emph{JEPD relations} over $\Univ$ if
%   \begin{Itemize}
%     \item
%       the relations in $\URel$ are pairwise disjoint;
%     \item
%       $\Univ \times \Univ = \bigcup_{r \in \URel} r$.
%   \end{Itemize}
  the relations in $\URel$ are pairwise disjoint and $\Univ \times \Univ = \bigcup_{r \in \URel} r$.
  \par
  An \emph{abstract partition scheme} is a pair $(\Univ,\URel)$
  where $\URel$ is a set of \emph{JEPD relations} over $\Univ$.
%   A \emph{partition scheme} \cite{LigozatR04} is an abstract partition scheme such that
%   \begin{Itemize}
%     \item
%       the identity relation $\id$ is contained in $\URel$;
%     \item
%       for every $r \in \URel$, there is some $s \in \URel$ such that $r\breve{~} = s$.
%   \end{Itemize}
  $(\Univ,\URel)$ is called a \emph{partition scheme} \cite{LigozatR04} if
  $\URel$ contains the identity relation $\id$
  and, for every $r \in \URel$, there is some $s \in \URel$ such that $r\breve{~} = s$.
\end{defi}

The universe \Univ represents the set of all spatial or temporal entities,
and \URel being a set of JEPD relations
ensures that each two entities are in exactly one relation from \URel.
Incomplete information about two entities is modeled by taking the union of base relations,
with the universal relation (the union of all base relations) representing that no information is available.
Disjointness of the base relations ensures that there is a unique way to represent
an arbitrary relation, and exhaustiveness ensures that the empty relation
can never occur in a consistent set of constraints (which are defined in Section \ref{sec:reasoning}).

Ligozat and Renz \cite{LigozatR04} base their definition of a
qualitative calculus on the notion of a partition scheme. This excludes
calculi like CDR and RDR which do not have strong converses. Hence,
we take a more general approach based on the notion of an abstract partition
scheme.  This accommodates existing calculi with these weaker
properties: some existing spatio-temporal representations do not
require an identity relation, and some representations are
deliberately kept coarse and thus do not guarantee that the converse
of a base relation is again a (base) relation.  Furthermore, the
computation of the converse operation may be easier when weaker
properties are postulated.  The same rationale applies to the
composition operation.  Thus, the following definition of a spatial
calculus, based on abstract partition schemes, contains minimal
requirements.

\begin{defi}
  \label{def:qualitative_calculus}
  A \emph{qualitative calculus} with binary relations
  is a tuple\linebreak $(\Rel,\Int,\breve{},\diamond)$ with the following properties.
  \begin{Itemize}
    \item
      $\Rel$ is a finite, non-empty set of \emph{base relations}.
      The subsets of $\Rel$ are called \emph{relations}.
      We use $r,s,t$ to denote base relations and $R,S,T$ to denote relations.
    \item
      $\Int = (\Univ, \varphi)$ is an \emph{interpretation} with
      a non-empty universe $\Univ$ and
      a map $\varphi : \Rel \to 2^{\Univ \times \Univ}$
      with $(\Univ, \{\varphi(r) \mid r \in \Rel\})$ being a weak partition scheme.
      The map $\varphi$ is extended to arbitrary relations by setting
      $\varphi(R) = \bigcup_{r\in R}\varphi(r)$ for every $R \subseteq \Rel$.
    \item
      The \emph{converse operation} $\breve{~}$ is a map
      $\breve{~} : \Rel \to 2^\Rel$ that satisfies
      \begin{equation}
        \varphi(r\breve{~}) \supseteq \varphi(r)\breve{~}
        \label{eq:abstract_converse}
      \end{equation}
      for every $r \in \Rel$.
      The operation $\breve{~}$ is extended to arbitrary relations by setting
      $R\breve{~} = \bigcup_{r\in R} r\breve{~}$ for every $R \subseteq \Rel$.
    \item
      The \emph{composition operation} $\diamond$ is a map
      $\diamond : \Rel \times \Rel \to 2^\Rel$ that satisfies
      \begin{equation}
        \varphi(r \diamond s) \supseteq \varphi(r) \circ \varphi(s)
        \label{eq:abstract_composition}
      \end{equation}
      for all $r,s \in \Rel$.
      The operation $\diamond$ is extended to arbitrary relations by setting
      $R \diamond S = \bigcup_{r\in R} \bigcup_{s\in S} r \diamond s$ for every $R,S \subseteq \Rel$.
  \end{Itemize}  
\end{defi}

We call Properties \eqref{eq:abstract_converse} and \eqref{eq:abstract_composition}
\emph{abstract converse} and \emph{abstract composition}, following Ligozat's naming \cite{Lig05}.
Our notion of a qualitative calculus makes weaker requirements on the converse
operation than Ligozat and Renz's notions of a weak representation \cite{Lig05,LigozatR04}.
We have already discussed a rationale behind choosing these ``weaker than weak'' variants
and will name another one in Section \ref{sec:reasoning}.
On the other hand, our notion makes stronger requirements on the converse
than Nebel and Scivos's notion of a constraint algebra \cite{DBLP:journals/ki/NebelS02}.
In Section \ref{sec:reasoning}, we will discuss a reason for adopting the minimal requirements we pose to abstract converse and composition.
The following definition gives the stronger variants of converse and composition existing in the literature.

\begin{defi}
  \label{def:stronger_versions_of_comp+conv}
  Let $C = (\Rel,\Int,\breve{},\diamond)$ be a qualitative calculus.
  \par\smallskip\noindent
%   \begin{Itemize}
%     \item
      $C$ has
      \emph{weak converse} if, for all $r \in \Rel$:
      \begin{equation}
        r\breve{~} = \bigcap\{S \subseteq \Rel \mid \varphi(S) \supseteq \varphi(r)\breve{~}\}
        \label{eq:weak_converse}
      \end{equation}
%     \item
      $C$ has
      \emph{strong converse} if, for all $r \in \Rel$:
      \begin{equation}
        \varphi(r\breve{~}) = \varphi(r)\breve{~}
        \label{eq:strong_converse}
      \end{equation}
%     \item
      $C$ has
      \emph{weak composition} if, for all $r,s \in \Rel$:
      \begin{equation}
        r\diamond s = \bigcap\{T \subseteq \Rel \mid \varphi(T) \supseteq \varphi(r)\circ\varphi(s)\}
        \label{eq:weak_composition}
      \end{equation}
%     \item
      $C$ has
      \emph{strong composition} if, for all $r,s \in \Rel$:
      \begin{equation}
        \varphi(r \diamond s) = \varphi(r) \circ \varphi(s)
        \label{eq:strong_composition}
      \end{equation}
%   \end{Itemize}
\end{defi}

The following fact captures that Properties \eqref{eq:abstract_converse}--\eqref{eq:strong_composition}
immediately carry over to arbitrary relations;
the straightforward proof is given in
the appendix \ref{app:proofs4Req}.
%\cite{DMSW13}.
It has consequences for efficient spatio-temporal reasoning,
which are explained in Section \ref{sec:reasoning}.

\begin{fact}
  \label{fact:weak+strong_conv+comp_general}
  Given a qualitative calculus $(\Rel,\Int,\breve{},\diamond)$
  and relations $R,S \subseteq \Rel$,
  the following hold:
  \begin{align}
    \varphi(R\breve{~}) & \supseteq \varphi(R)\breve{~}         \label{eq:abstract_converse_general}   \\
    \varphi(R \diamond S)  & \supseteq \varphi(R) \diamond \varphi(S) \label{eq:abstract_composition_general}
  \end{align}
%   If $C$ has
%   \begin{itemize}
%     \item
      If $C$ has
      weak converse, then, for all $R \subseteq \Rel$:
      \begin{equation}
        R\breve{~} = \bigcap\{S \subseteq \Rel \mid \varphi(S) \supseteq \varphi(R)\breve{~}\} 
        \label{eq:weak_converse_general}
      \end{equation}
%     \item
      If $C$ has
      strong converse, then, for all $R \subseteq \Rel$:
      \begin{equation}
        \varphi(R\breve{~}) = \varphi(R)\breve{~} 
        \label{eq:strong_converse_general}
      \end{equation}
%     \item
      If $C$ has
      weak composition, then, for all $R, S \subseteq \Rel$:
      \begin{equation}
        R\diamond S = \bigcap\{T \subseteq \Rel \mid \varphi(T) \supseteq \varphi(R)\circ\varphi(S)\}
        \label{eq:weak_composition_general}
      \end{equation}
%     \item
      If $C$ has
      strong composition, then, for all $R, S \subseteq \Rel$:
      \begin{equation}
        \varphi(R \diamond S) = \varphi(R) \circ \varphi(S)
        \label{eq:strong_composition_general}
      \end{equation}
%   \end{itemize}
\end{fact}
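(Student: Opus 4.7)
The plan is to derive all six claims mechanically from the ``union-of-bases'' definitions $\varphi(R) = \bigcup_{r \in R}\varphi(r)$, $R\breve{~} = \bigcup_{r \in R} r\breve{~}$, and $R \diamond S = \bigcup_{r \in R, s \in S} r \diamond s$, combined with two elementary facts from relation calculus: converse distributes over unions, and composition distributes over unions in each argument. Under these observations the base-relation hypotheses \eqref{eq:abstract_converse}--\eqref{eq:strong_composition} propagate pointwise through the unions with no further work.

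For \eqref{eq:abstract_converse_general} I would exhibit the chain
$\varphi(R\breve{~}) = \bigcup_{r\in R}\varphi(r\breve{~}) \supseteq \bigcup_{r\in R}\varphi(r)\breve{~} = \bigl(\bigcup_{r\in R}\varphi(r)\bigr)\breve{~} = \varphi(R)\breve{~}$,
where the $\supseteq$-step applies \eqref{eq:abstract_converse} termwise. Replacing that $\supseteq$ by $=$ under the hypothesis \eqref{eq:strong_converse} immediately yields \eqref{eq:strong_converse_general}. The proofs of \eqref{eq:abstract_composition_general} and \eqref{eq:strong_composition_general} run along the same lines, but with the double union over $R \times S$ and using the fact that $\circ$ distributes over unions in both arguments, in combination with \eqref{eq:abstract_composition} (resp.\ \eqref{eq:strong_composition}) applied pairwise.

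The two genuinely non-trivial cases are the weak identities \eqref{eq:weak_converse_general} and \eqref{eq:weak_composition_general}, each of which needs a two-sided inclusion. For \eqref{eq:weak_converse_general}, denote the right-hand side by $W$. One direction, $W \subseteq R\breve{~}$, is immediate, since $R\breve{~}$ itself belongs to the family being intersected by \eqref{eq:abstract_converse_general}. For the reverse direction, let $S$ be any member of that family; expanding $\varphi(R)\breve{~}$ as $\bigcup_{r \in R}\varphi(r)\breve{~}$ shows that $\varphi(S) \supseteq \varphi(r)\breve{~}$ for every $r \in R$, so the base-level identity \eqref{eq:weak_converse} forces $r\breve{~} \subseteq S$ for each such $r$, and hence $R\breve{~} = \bigcup_{r \in R} r\breve{~} \subseteq S$. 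Intersecting over all admissible $S$ gives $R\breve{~} \subseteq W$. The argument for \eqref{eq:weak_composition_general} is entirely parallel, but ranges over pairs $(r,s) \in R \times S$ instead of single base relations and invokes \eqref{eq:weak_composition} in the second inclusion.

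The only real obstacle is keeping the two inclusions in the weak case straight: the easy direction leans on the already-established generalized abstract property, while the other reduces the weak-identity hypothesis from arbitrary relations back to the base-relation instance supplied by the definition. Everything else is routine bookkeeping with unions.
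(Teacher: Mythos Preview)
Your proof is correct. For \eqref{eq:abstract_converse_general}, \eqref{eq:abstract_composition_general}, \eqref{eq:strong_converse_general}, and \eqref{eq:strong_composition_general} you do exactly what the paper does: push the base-level hypothesis through the defining unions using distributivity of $\breve{~}$ and $\circ$ over $\cup$.

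For the weak cases \eqref{eq:weak_converse_general} and \eqref{eq:weak_composition_general}, however, your argument is genuinely different from the paper's and in fact cleaner. The paper writes $R=\{r_1,\dots,r_n\}$, expands each $r_i\breve{~}$ as a finite intersection $\bigcap_j S_{ij}$, rewrites $R\breve{~}=\bigcup_i\bigcap_j S_{ij}$ as $\bigcap_{j_1}\cdots\bigcap_{j_n}\bigcup_i S_{ij_i}$ via distributivity, and then proves a separate equivalence between ``$\varphi(S)\supseteq\varphi(R)\breve{~}$'' and ``$S$ decomposes as $S_1\cup\dots\cup S_n$ with $\varphi(S_i)\supseteq\varphi(r_i)\breve{~}$'' in order to identify the two intersections. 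Your two-inclusion argument bypasses all of this: one inclusion is immediate because $R\breve{~}$ is itself a member of the intersected family (by \eqref{eq:abstract_converse_general}), and the other follows directly because any $S$ in the family dominates each $\varphi(r)\breve{~}$ and hence, by the base-level weak identity \eqref{eq:weak_converse}, contains each $r\breve{~}$. This avoids the enumeration, the distributivity manipulation, and the auxiliary equivalence, at no cost in rigor.
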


Since base relations are non-empty and JEPD, we have
\begin{fact}\label{fact:phi-injective}
For any qualitative calculus, $\varphi$ is injective.
\end{fact}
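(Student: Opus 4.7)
The plan is to show contrapositively that distinct subsets of $\Rel$ cannot have the same image under the extended $\varphi$. Suppose $R,R' \subseteq \Rel$ with $R \neq R'$; without loss of generality, pick $r \in R \setminus R'$. I need to produce some pair in $\varphi(R)$ that lies outside $\varphi(R')$, which will establish $\varphi(R) \neq \varphi(R')$.

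The argument proceeds in three short steps. First, I use non-emptiness of $\varphi(r)$ (guaranteed because the $\varphi$-images form a set of JEPD relations, and JEPD relations are by Definition~\ref{def:JEPD+partition_schemes} non-empty) to pick some $(u,v) \in \varphi(r) \subseteq \varphi(R)$. Second, I argue that $(u,v) \notin \varphi(R') = \bigcup_{s \in R'} \varphi(s)$: if it were, then $(u,v) \in \varphi(s)$ for some $s \in R'$, so $\varphi(r) \cap \varphi(s) \ni (u,v)$ would be non-empty; by pairwise disjointness of the base-relation images, this forces $r = s$, contradicting $r \notin R'$. Combining these two observations gives $(u,v) \in \varphi(R) \setminus \varphi(R')$ and hence $\varphi(R) \neq \varphi(R')$.

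There is no real obstacle — the proof is entirely forced once one has JEPD and non-emptiness in hand. The only subtle point worth flagging in the write-up is that the argument implicitly uses injectivity of $\varphi$ already on base relations, which itself follows from the same two ingredients: if $\varphi(r) = \varphi(r')$ for distinct $r,r' \in \Rel$, then pairwise disjointness forces this common set to be empty, contradicting non-emptiness. So the whole statement is essentially a restatement of JEPD at the level of subsets, and can be dispatched in a few lines.
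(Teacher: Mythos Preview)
Your proof is correct and matches the paper's approach exactly: the paper gives only the one-line justification ``Since base relations are non-empty and JEPD, we have'' before stating the fact, and your argument simply unpacks those two ingredients (non-emptiness to pick a witness pair, pairwise disjointness to exclude it from the other image). Your observation that injectivity on base relations follows from the same two properties is also precisely what the paper alludes to in the subsequent paragraph comparing with Ligozat and Renz's non-injective setting.
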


Comparing Definitions
\ref{def:JEPD+partition_schemes}--\ref{def:stronger_versions_of_comp+conv}
with the basic notions of a qualitative calculus in \cite{LigozatR04},
a \emph{weak representation} is a calculus with identity relation,
strong converse and abstract composition.  Our basic notion of a
qualitative calculus is more general: it does not require an identity
relation, and it only requires abstract converse and composition.
Conversely, \cite{LigozatR04} are slightly more general than we are,
because the map $\varphi$ need not be injective. However, this extra
generality is not very meaningful: if base relations are JEPD,
$\varphi$ could only be non-injective in giving multiple names to the
empty relation.  Furthermore, in \cite{LigozatR04}, a
\emph{representation} is a weak representation with strong composition
and an injective map $\varphi$.

\subsection{Spatio-Temporal Reasoning}
\label{sec:reasoning}

The most important flavor of spatio-temporal reasoning is constraint-based reasoning.
Like with a classical constraint satisfaction problem (CSP), we are given a set of variables and constraints.
The task of constraint satisfaction is to decide whether there exists a valuation of all variables that satisfies the constraints. 
In calculi for spatio-temporal reasoning, variables range over the specific spatial or temporal domain of a qualitative representation.
The relations defined by the calculus serve as constraint relations. 
More formally, we have:

\begin{defi}[$\text{QCSP}$]
    Let $(\Rel,\Int,\breve{},\diamond)$ be a binary qualitative calculus with $\Int = (\Univ, \varphi)$ and let $X$ be a set of variables ranging over $\Univ$.
     A \emph{qualitative constraint} is a formula 
    $x_i \, R_j\, x_k$ with variables $x_i, x_k\in X$ and relation $R_j \in \Rel$.
    We say that a valuation $\psi : X \to \Univ$ \emph{satisfies} $x_i \, R_j\, x_k$ if $\left(\psi(x_1), \psi(x_k)\right) \in \phi(R_j)$ holds.
    
    A \emph{qualitative constraint satisfaction problem} (QCSP) is the task to decide whether there is a valuation $\psi$ for a set of variables satisfying a set of constraints.
\end{defi}

In the following we use $X$ to refer to the set of variables and $r_{x,y}$ stands for the constraint relation between variables $x$ and $y$. 
For simplicity and wlog.~it is assumed that for every pair of variables exactly one constraint relation is given.
%\todo{TM: but this is a severe restriction}\todo{DW: no, it is not: you can always have a constraint involving the universal relation!}

Several techniques originally developed for finite domain CSP can be adapted to spatio-temporal QCSPs.
Since deciding CSP instances is already NP-complete for search problems with finite domains, heuristics are important.
One particular valuable technique is constraint propagation which aims at making implicit constraints explicit in order to identify variable assignments that would violate some constraint. 
By pruning away these variable assignments, a consistent valuation can be searched more efficiently.
A common approach is to enforce $k$-consistency. 

\begin{defi}
A CSP is $k$-consistent if for all subsets of variables $X'\subset X$ with $|X'|=k-1$ we can extend any valuation of $X'$ that satisfies the constraints to a valuation of $X' \cup \{x\}$  also satisfying the constraints, where $x \in X\setminus X'$ is any additional variable. \label{def:k-consistency}
\end{defi}

QCSPs are naturally 1-consistent as the domains are infinite and there are no unary constraints.
A QCSP is 2-consistent if $r_{x,y} = r_{y,x}\breve{~}$ and $r_{x,y} \neq \emptyset$ as relations are typically serial.
A 3-consistent QCSP is also called {\em path-consistent} and Definition \ref{def:k-consistency} can be rewritten using compositions as

\begin{equation}
 r_{x,y} \subseteq \bigcap_{z\in X} r_{x,z} \circ r_{z,y}
\end{equation}
 
and we can enforce the 3-consistency by iterating the refinement operation 

\begin{equation}
 r_{x,y} \gets r_{x,y} \cap r_{x,z} \circ r_{z,y} \label{eq:rel-refinement}
\end{equation}
for all variables $x,y,z \in X$ until a fix point is reached.
This procedure is known as the path-consistency algorithm \cite{dechter}.
For finite constraint networks the algorithm always terminates since the refinement operation is monotone and there are only finitely many relations.

%The algorithm terminates for finite CSPs 
%\tbc{show termination ... draft by FD; tbc}
%In order to show the termination of a-closure, we follow the simplest implementation by checking every triple in every iteration \cite{Montanari74}.
%\cfd{do we need that cite as it is probably given in \cite{dechter} as well?}
%One of the three conditions is given after each iteration: 
%a) the intersection is empty, i.e.~the CSP is inconsistent,
%b) no change occurs, i.e.~the fix point is reached, or
%c) at least one change occurred for relation $r_{x,y}$.
%In case of a) and b) the algorithm terminates.
%In case of c) the arbitrary relation $r_{x,y}$ consists of less base relations 
%after the intersection given by (\ref{eq:rel-refinement}). 
%Thus, c) can only occur limited number of times until case a) or b) occurs.

If a qualitative calculus does not provide strong composition, iterating Equation (\ref{eq:rel-refinement}) is not possible as it would lead to relations not contained in \Rel. 
It is however straightforward to weaken Equation (\ref{eq:rel-refinement}) using weak composition.
\begin{equation}
 r_{x,y} \gets r_{x,y} \cap r_{x,z} \diamond r_{z,y} \label{eq:rel-refinement2}
\end{equation}
This procedure is called enforcing {\em algebraic closure} or {\em a-closure} for short.
The reason why, in Definition \ref{def:qualitative_calculus}, we require composition to be at least abstract
is that the underlying inequality guarantees that reasoning via a-closure is sound.

Enforcing $k$-consistency or algebraic closure does not change the solutions of a CSP, as only impossible valuations are removed.
If during application of Equation (\ref{eq:rel-refinement2}) an empty relation occurs, the QCSP is thus known to be inconsistent.
By contrast, an algebraically closed QCSP may not be consistent though. 
However, for several qualitative calculi (or at least sub-algebras thereof) algebraic closure and consistency coincide.

Though we speak about composition in the following two paragraphs, the same statements hold for converse.

Fact \ref{fact:weak+strong_conv+comp_general} has the consequence that the composition operation of a calculus is uniquely determined
if the composition of each pair of base relations is given.
This information is usually stored in a table, the \emph{composition table}.
Then, computing the composition of two arbitrary relations is just a matter of table look-ups which allows algebraic closure to be enforced efficiently.
Speaking in terms of composition tables,
abstract composition implies that each cell corresponding to $r \diamond s$ contains \emph{at least}
those base relations $t$ whose interpretation intersects with $\varphi(r) \circ \varphi(s)$.
In addition, weak composition implies that each cell contains \emph{exactly} those $t$.
If composition is strong, then \Rel and $\varphi$ even have to ensure that
whenever $\varphi(t)$ intersects with $\varphi(r) \circ \varphi(s)$, it is contained in $\varphi(r) \circ \varphi(s)$ --
i.e., the composition of the interpretation of any two base relations has to be the union of interpretations of certain base relations.

\subsection{Existing Qualitative Spatio-Temporal Representations}
This paper is concerned with properties of binary spatio-temporal calculi
that are described in the literature and implemented in the spatial representation and reasoning tool \SparQ
\cite{sparqmanual,WFW+06}.
Table \ref{tab:calculi} lists these calculi.

\begin{table}[t]
  \rowcolors*{1}{lightblue}{}%
  {\centering
    \begin{tabular}{lllrcl}
      \hline\rowcolor{medblue}
      Name                                    & Ref.                                              & Domain                               & \#BR   & \multicolumn{2}{>{\columncolor{medblue}}l}{~RM} \\
      \hline
      9-Intersection                          & \cite{DBLP:conf/ssd/Egenhofer91}                  & simple 2D regions                    &   8    & ~I & \cite{GPP95,KPWZ10}                        \\
      Allen's interval relations              & \cite{allen:83}                                   & intervals (order)                    &  13    & ~A & \cite{Vilain:1989qv}                       \\
      Block Algebra                           & \cite{DBLP:journals/logcom/BalbianiCC02}          & $n$-dimensional blocks               & $13^n$ & ~A & \cite{DBLP:journals/logcom/BalbianiCC02}   \\
      Cardinal Dir.\ Calculus  CDC            & \cite{DBLP:conf/ogai/Frank91,ligozat-JVLC:98}     & directions (point abstr.)            &   9    & ~A & \cite{ligozat-JVLC:98}                     \\
      Cardinal Dir.\ Relations CDR            & \cite{DBLP:journals/ai/SkiadopoulosK04}           & regions                              & 218    & ~P &                                            \\
%       \tworows
      CycOrd, binary CYC${}_\text{b}$         & \cite{DBLP:journals/ai/IsliC00}                   & oriented lines                       &   4    & ~U &                                            \\
      Dependency Calculus                     & \cite{ragni-scivos-KI:05}                         & points (partial order)               &   5    & ~A & \cite{ragni-scivos-KI:05}                  \\
      Dipole Calculus\myfnm{a} DRA$_\text{f}$ & \cite{moratz-renz-wolter-ECAI:00,MoratzEtAl2011}  & directions from line segm.\          &  72    & ~I & \cite{WL10}                                \\
      \quad DRA$_{\text{fp}}$                 & \cite{MoratzEtAl2011}                             & directions from line segm.\          &  80    & ~I &                                            \\
      \quad DRA-connectivity                  & \cite{wallgruen-etal-ACMGIS:10}                   & connectivity of line segm.\          &   7    & ~U &                                            \\
      Geometric Orientation                   & \cite{DyL10}                                      & relative orientation                 &   4    & ~U &                                            \\
      INDU                                    & \cite{pujari-sattar-IJCAI:99}                     & intervals (order, rel.\ dur.n)       &  25    & ~P &                                            \\
      \tworows
      OPRA$_m$, $m = 1,\dots,8$               & \cite{Moratz06_ECAI,MossakowskiMoratz2011}        & \multicolumn{2}{>{\columncolor{white}}l}{oriented points \hspace*{3.5mm} $4m\cdot(4m+1)$} &     \\
      (Oriented Point Rel.\ Algebra)          &                                                   &                                      &        & ~I & \cite{WL10}                                \\
      \showrowcolors
      Point Calculus                          & \cite{Vilain:1989qv}                              & points (total order)                 &   3    & ~A & \cite{Vilain:1989qv}                       \\
      Qualitat.\ Traject.\ Calc.\ QTC$_{\text{B11}}$ & \cite{Weghe04,DBLP:conf/geos/WegheKBM05}   & moving point obj.s in 1D             &   9    & ~U &                                            \\
      \quad QTC$_{\text{B12}}$                & ''                                                & ''                                   &  17    & ~U &                                            \\
      \quad QTC$_{\text{B21}}$                & ''                                                & moving point obj.s in 2D             &   9    & ~U &                                            \\
      \quad QTC$_{\text{B22}}$                & ''                                                & ''                                   &  27    & ~U &                                            \\
      \quad QTC$_{\text{C12}}$                & ''                                                & ''                                   &  81    & ~U &                                            \\
      \quad QTC$_{\text{C22}}$                & ''                                                & ''                                   & 305    & ~U &                                            \\
%       \tworows
      Region Connection Calc.\ RCC-5          & \cite{randell-cui-cohn-KR:92}                     & regions                              &   5    & ~A & \cite{JD97}                                \\
      \quad RCC-8                             & \cite{randell-cui-cohn-KR:92}                     & regions                              &   8    & ~A & \cite{renz:02}                             \\
% 	\tworows
      Rectangular Cardinal Rel.s RDR          & \cite{NavarreteEtAl13}                            & regions                              &  36    & ~A & \cite{NavarreteEtAl13}                     \\
% 	\quad (RDR) \\
      Star Algebra STAR${}_4$                 & \cite{renz-mitra-PRICAI:04}                       & directions from a point              &   9    & ~P &                                            \\
      \hline
    \end{tabular}%
  }
  \myfn{a}{Variant DRA$_\text{c}$ is not based on a weak partition scheme -- JEPD is violated \cite{MoratzEtAl2011}.}
  \par\smallskip
  \rowcolors*{1}{}{}%
  \begin{tabular}{l@{~~}l}
    \#BR: & number of base relations \\
    RM:   & reasoning method used to decide consistency of CSPs with base relns only: \\
          & \textbf{A}-closure;\, \textbf{P}olynomial: reducible to linear programming;\\
          & \textbf{I}ntractable (assuming P $\neq$ NP);\, \textbf{U}nknown
  \end{tabular}

  \par\smallskip
  \addtolength{\belowcaptionskip}{-10pt} 
  \caption{Overview of the binary calculi tested.}
  \label{tab:calculi}
\end{table}

\section{Relation Algebras}
\label{sec:relation_algebras}

\subsection{Definition}

If we focus our attention on spatio-temporal calculi with binary relations,
it is reasonable to ask whether they are relation algebras (RAs).
If a calculus is a RA, it is guaranteed to have properties
that allow several optimizations in constraint reasoners.
For example, associativity of the composition operation $\diamond$ ensures that,
if the reasoner encounters a path $ArBsCtD$ of length 3,
then the relation between $A$ and $D$ can be computed ``from left to right''.
Without associativity, $(r \diamond s) \diamond t$ as well as $r \diamond (s \diamond t)$
would have to be computed.
RAs have been considered in the literature for spatio-temporal calculi \cite{LigozatR04,Due05,Mos07}. %\todo{TS: more examples and references?}

An (abstract) RA is defined in \cite{Mad06}; here we use the symbols $\cup$, $\diamond$, and $\id$ instead of $+$, $;$, and $1'$.
Let $A$ be a set containing $\id$ and $1$, and let $\cup$, $\diamond$ be binary and $\bar{~}$, $\breve{~}$ unary operations on $A$.
The relevant axioms (\RA{1}--\RA{10}, \WA, \SA, and \PL) are given in Table \ref{tab:relation_algebra_axioms}.
All axioms except \PL can be weakened to only one of two inclusions,
which we denote by a superscript ${}^\supseteq$ or ${}^\subseteq$.
For example, $\RA[sup]{7}$ denotes $(r\breve{~})\breve{~} \supseteq r$.
Likewise, we use \PL[right] and \PL[left].
Then,
$\mathfrak{A} = (A,\cup,\bar{~},\diamond,\breve{~},\id)$ is a
\begin{Itemize}
  \item
    \emph{non-associative relation algebra (NA)} if it satisfies Axioms \RA1--\RA3, \RA5--\RA{10};
  \item
    \emph{semi-associative relation algebra (SA)} if it is an NA and satisfies
    Axiom \axiom{SA},
  \item
    \emph{weakly associative relation algebra (WA)} if it is an NA and satisfies
    \axiom{WA},
  \item 
    \emph{relation algebra (RA)} if it satisfies \RA1--\RA{10},
\end{Itemize}
for all $r,s,t \in A$.
Every RA is a WA; every WA is an SA; every SA is an NA. 
\begin{table}[t]
  \addtolength{\belowcaptionskip}{-16pt}
  \rowcolors{1}{lightblue}{}%
  \centering
  \begin{tabular}{lrcll}
    \hline
    \RA1       & $r \cup s$                                                    & $=$ & $s \cup r$                           & $\cup$-commutativity                  \\
    \RA2       & $r \cup (s \cup t)$                                           & $=$ & $(r \cup s) \cup t$                  & $\cup$-associativity                  \\
    \RA3       & $\overline{\bar r \cup \bar s} \cup \overline{\bar r \cup s}$ & $=$ & $r$                                  & Huntington's axiom                    \\
    \RA4       & $r \diamond (s \diamond t)$                                   & $=$ & $(r \diamond s) \diamond t$          & $\diamond$-associativity                 \\
    \RA5       & $(r \cup s) \diamond t$                                       & $=$ & $(r \diamond t) \cup (s \diamond t)$ & $\diamond$-distributivity                \\
    \RA6       & $r \diamond \id$                                              & $=$ & $r$                                  & identity law                          \\
    \RA7       & $(r\breve{~})\breve{~}$                                       & $=$ & $r$                                  & $\breve{~}$-involution                \\
    \RA8       & $(r \cup s)\breve{~}$                                         & $=$ & $r\breve{~} \cup s\breve{~}$         & $\breve{~}$-distributivity            \\
    \RA9       & $(r \diamond s)\breve{~}$                                     & $=$ & $s\breve{~} \diamond r\breve{~}$     & $\breve{~}$-involutive distributivity \\
    \RA{10}    & $r\breve{~} \diamond \overline{r \diamond s} \cup \bar s$     & $=$ & $\bar s$                             & Tarski/de Morgan axiom                \\
    \hline
%                &                                                               &     &                                      &                                       \\%[-6pt]
    \WA        & $((r \cap \id) \diamond 1) \diamond 1$                        & $=$ & $(r \cap \id) \diamond 1$            & weak $\diamond$-associativity            \\
    \SA        & $(r \diamond 1) \diamond 1$                                   & $=$ & $r \diamond 1$                       & $\diamond$ semi-associativity            \\
%     \hline
%                &                                                               &     &                                      &                                       \\%[-6pt]
%     \\[-6pt]
%     \RA{5l}    & $r \diamond (s \cup t)$                                       & $=$ & $(r \diamond s) \cup (r \diamond t)$ & $\diamond$ left-distributivity           \\
    \RA{6l}    & $\id \diamond r$                                              & $=$ & $r$                                  & left-identity law                     \\
%     & \RA{7w}    & $(r\breve{~})\breve{~}$                                     & \supseteq & $r$                          & weak $\breve{~}$-involution           \\
%     \hline
%                &                                                               &     &                                      &                                       \\%[-6pt]
%     \\[-6pt]
    \PL        & $(r \diamond s) \cap t\breve{~} = \emptyset$                  & $\Leftrightarrow$ & $(s \diamond t) \cap r\breve{~} = \emptyset$ 
                                                                                                                            & Peircean law \\
    \hline
  \end{tabular}
  \par\smallskip
  \caption{%
    Axioms for relation algebras and weaker variants \cite{Mad06}.\protect\\
%    Each line is implicitly preceded by the quantifier prefix $\forall r\,\forall s\,\forall t$.%
  }
  \label{tab:relation_algebra_axioms}
\end{table}

In the literature, a different axiomatization is sometimes used, for example in \cite{LigozatR04}.
The most prominent difference is that \RA{10} is replaced by \PL,
``a more intuitive and useful form, known as the Peircean law or De Morgan's Theorem K'' \cite{HH02}.
It is shown in \cite[Section 3.3.2]{HH02} that, given \RA1--\RA3, \RA5, \RA7--\RA9,
the axioms \RA{10} and \PL are equivalent. The implication $\PL \Rightarrow \RA{10}$ does not need \RA5 and \RA8.

Furthermore, Table \ref{tab:relation_algebra_axioms} contains the redundant axiom \RA{6l}
because it may be satisfied when some of the other axioms are violated. It is straightforward to establish
% that \RA5 and \RA{5l} are equivalent given \RA1, \RA4 and \RA6--\RA9,
that \RA6 and \RA{6l} are equivalent given \RA7 and \RA9, see 
the appendix \ref{app:proofs4RA}.
%\cite{DMSW13}.

Due to our minimal requirements to a qualitative calculus given in Def.\ \ref{def:qualitative_calculus},
certain axioms are always satisfied; see 
the appendix 
%\cite{DMSW13}
for a proof of the following%
\begin{fact}
  \label{fact:minimal_axiom_set_implied_by_calculus_def}
  Every qualitative calculus satisfies
  \RA1--\RA3, \RA5, \RA[sup]7, \RA8, \WA[sup], \SA[sup]
  for all (base and complex) relations.
  This axiom set is maximal: each of the remaining axioms
  in Table \ref{tab:relation_algebra_axioms} is not satisfied by some
  qualitative calculus.
\end{fact}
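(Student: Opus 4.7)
My plan is to prove the two halves separately: first that every listed axiom holds automatically from Definition \ref{def:qualitative_calculus}, and second that each omitted axiom in Table \ref{tab:relation_algebra_axioms} is violated by some concrete calculus in Table \ref{tab:calculi}.

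For the positive half, the starting observation is that the set of relations in a calculus is the full power set of the finite set $\Rel$, so $(2^{\Rel},\cup,\bar{~})$ already forms a Boolean algebra; this gives \RA1, \RA2 and \RA3 at once. Axioms \RA5 and \RA8 fall out of the extensions $R \diamond S = \bigcup_{r\in R,\,s\in S} r\diamond s$ and $R\breve{~}=\bigcup_{r\in R} r\breve{~}$ in Definition \ref{def:qualitative_calculus}, which make both operations distribute over arbitrary unions. The three remaining items exploit the abstract inequalities \eqref{eq:abstract_converse} and \eqref{eq:abstract_composition} together with Fact \ref{fact:phi-injective}. For \RA[sup]7 I would apply the abstract converse inequality twice to obtain $\varphi((r\breve{~})\breve{~})\supseteq\varphi(r)$; since $\varphi(r)$ is non-empty and lies inside a JEPD union, injectivity forces $r\in(r\breve{~})\breve{~}$ at the base level, and \RA8 lifts this to complex $R$. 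For \SA[sup] I would first show $r\in r\diamond 1$ for every base $r$: pairing each $(u,w)\in\varphi(r)$ with itself witnesses $\varphi(r)\subseteq\varphi(r)\circ\univ=\varphi(r)\circ\varphi(1)$, so abstract composition forces $r\in r\diamond 1$. Then any $t\in r\diamond 1$ also lies in $t\diamond 1 \subseteq (r\diamond 1)\diamond 1$, giving $r\diamond 1\subseteq (r\diamond 1)\diamond 1$; \WA[sup] is the same argument applied to $r\cap\id$ in place of $r$.

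For the maximality half I would pick, one by one, a concrete calculus from Table \ref{tab:calculi} refuting each of the omitted axioms. The identity laws \RA6 and \RA{6l} fail in any calculus lacking an identity base relation, e.g.\ CDR or RDR. The equality \RA7 fails in any calculus whose converse is genuinely weak rather than strong. The full \RA4, \RA9, \RA{10}, \WA, \SA and \PL each fail in calculi whose composition is merely weak, so that the inclusion $\varphi(r\diamond s)\supseteq\varphi(r)\circ\varphi(s)$ is strict for some base pair $r,s$; several DRA and OPRA variants will supply such witnesses, in line with the experimental findings announced in Section 4.

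I expect the main obstacle to lie in the maximality half: it is not a single uniform argument but a case analysis, where for each axiom one must produce specific base relations and verify the failing identity against the calculus's converse/composition table. The positive half, by contrast, is essentially bookkeeping on top of the Boolean structure of $2^{\Rel}$ and the two abstract inequalities already built into Definition \ref{def:qualitative_calculus}.
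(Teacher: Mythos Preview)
Your positive half is essentially the paper's argument: Boolean structure on $2^{\Rel}$ for \RA1--\RA3, the union-extension of $\breve{~}$ and $\diamond$ for \RA5 and \RA8, and the abstract inequalities \eqref{eq:abstract_converse}, \eqref{eq:abstract_composition} pushed through $\varphi$ together with JEPD for \RA[sup]7, \SA[sup], \WA[sup]. Your \SA[sup] argument is phrased combinatorially (first $r\in r\diamond 1$, then $t\in r\diamond 1$ implies $t\diamond 1\subseteq (r\diamond 1)\diamond 1$) where the paper works directly with $\varphi((r\diamond 1)\diamond 1)\supseteq\varphi(r\diamond 1)\circ(\Univ\times\Univ)\supseteq\varphi(r\diamond 1)$, but these are interchangeable.

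The maximality half, however, has a genuine gap. Your plan is to harvest counterexamples from the calculi in Table~\ref{tab:calculi}, but this cannot work for \WA: every calculus tested in Section~\ref{sec:properties} satisfies \WA\ (see Table~\ref{tab:calculi_tests}), so no witness for the failure of \WA[sub] is available there. More locally, your attribution of \RA9, \RA{10}, \SA, \WA, \PL\ failures to ``several DRA and OPRA variants'' is simply wrong: those calculi fail only \RA4 and satisfy all the other axioms. The paper's approach is different in kind: it constructs two small artificial calculi over $\Univ=\{0,1\}$ (one with two base relations, one with four) whose converse and composition are merely \emph{abstract}, not weak, and verifies by hand that together they violate every remaining (half-)axiom, including \WA[sub] and both directions of \RA4, \RA6, \RA{6l}, \RA9, \RA{10}, \PL. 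The freedom to choose an abstract (over-approximating) composition table is precisely what lets one engineer a \WA\ violation; calculi in the literature are defined with at least weak composition and weak converse, which is why the empirical route does not suffice.
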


\subsection{Discussion of the Axioms}
\label{sec:axiom_discussion}
We will now discuss the relevance of the above axioms for spatio-temporal representation and reasoning.
Due to Fact \ref{fact:minimal_axiom_set_implied_by_calculus_def}, we only need to consider
axioms \RA4, \RA6, \RA7, \RA9, \RA{10} (or \PL) and 
% -- in case some of these axioms are not satisfied --
their weakenings \RA{6l}, \SA, \WA.

\myparagraph{\RA4 (and \SA, \WA).}
Axiom \RA4 is helpful for modeling.
It allows for writing chains of compositions without parentheses,
which have an unambiguous meaning.
For example, consider the following statement in natural language about
the relative length and location of two intervals $A$ and $D$.
% \begin{quote}
  \emph{
  Interval $A$ is before some equally long interval that is contained in some longer interval
  that meets the shorter $D$.%
  }
% \end{quote}
This statement is just a conjunction of relations between $A$, the unnamed intermediary intervals $B,C$,
and $D$. When we evaluate it,
it intuitively does not matter whether we give priority to the composition of the relations between $A,B$ and $B,C$
or to the composition of the relations between $B,C$ and $C,D$.

However, INDU does not satisfy Axiom \RA4 and, therefore, here the two ways of parenthesizing the above statement
lead to different relations between $A$ and $D$. This behavior is sometimes attributed to the absence of strong composition,
which we will refute in Section \ref{sec:properties}.
Conversely, strong composition implies \RA4 since composition of binary relations over \Univ is associative:
\begin{fact}
  Let $C = (\Rel,\Int,\breve{},\diamond)$ be a qualitative calculus with strong composition.
  Then $C$ satisfies \RA4.
\end{fact}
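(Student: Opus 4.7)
The plan is to reduce associativity of $\diamond$ on the syntactic side to the well-known associativity of ordinary relational composition $\circ$ on the semantic side, using that $\varphi$ intertwines the two operations whenever composition is strong.

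First, I would lift the strong composition hypothesis from pairs of base relations to arbitrary (complex) relations. This is exactly Equation~\eqref{eq:strong_composition_general} in Fact~\ref{fact:weak+strong_conv+comp_general}, so no additional work is required: $\varphi(R \diamond S) = \varphi(R) \circ \varphi(S)$ for all $R,S \subseteq \Rel$. Applying this twice gives
\begin{align*}
  \varphi\bigl(r \diamond (s \diamond t)\bigr) &= \varphi(r) \circ \varphi(s \diamond t) = \varphi(r) \circ \bigl(\varphi(s) \circ \varphi(t)\bigr), \\
  \varphi\bigl((r \diamond s) \diamond t\bigr) &= \varphi(r \diamond s) \circ \varphi(t) = \bigl(\varphi(r) \circ \varphi(s)\bigr) \circ \varphi(t),
\end{align*}
where the two right-hand sides are equal because composition of binary relations over the universe $\Univ$ is associative.

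Second, I would pass from equality of the $\varphi$-images back to equality of the relations themselves. Since the base relations are JEPD and non-empty, the extended map $\varphi : 2^\Rel \to 2^{\Univ \times \Univ}$ is injective on subsets of $\Rel$: given any set $U \subseteq \Univ \times \Univ$ of the form $\varphi(R)$, we can recover $R$ as $\{r \in \Rel \mid \varphi(r) \cap U \neq \emptyset\}$, using pairwise disjointness and non-emptiness of the $\varphi(r)$. This is essentially the content of Fact~\ref{fact:phi-injective}. Applying it to the two complex relations $r \diamond (s \diamond t)$ and $(r \diamond s) \diamond t$, whose $\varphi$-images coincide by the previous step, yields \RA{4}.

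There is no real obstacle here beyond bookkeeping: the only thing to double-check is that the generalisation of strong composition to complex relations really does apply in both nested positions (which it does, since $s \diamond t$ and $r \diamond s$ are, by definition, subsets of $\Rel$), and that injectivity of $\varphi$ is being invoked on complex relations rather than just base relations. Both points are already covered by facts stated in the paper, so the proof is essentially a one-line computation wrapped by an injectivity argument.
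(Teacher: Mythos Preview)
Your proof is correct and follows exactly the reasoning the paper indicates: the paper's own justification is the single clause ``since composition of binary relations over $\Univ$ is associative,'' and your proposal simply unpacks this by applying \eqref{eq:strong_composition_general} twice, invoking associativity of $\circ$, and then using injectivity of the extended $\varphi$ (which indeed follows from JEPD and non-emptiness) to pull the equality back to the syntactic level. There is nothing to add.
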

Note that INDU still satisfies the weakenings \SA and \WA of \RA4,
and we already know from Fact \ref{fact:minimal_axiom_set_implied_by_calculus_def}
that the inequalities \SA[sup] and \WA[sup] are always satisfied. % by a qualitative calculus.

\smallskip
Furthermore, Axiom \RA4 is useful for optimizing reasoning algorithms:
suppose a scenario that contains the constraints $\{WrX, XsY, YtZ, Wr'Z\}$ with variables $W,X,Y,Z$ needs to be checked for consistency.
If \emph{one} of the inclusions \RA[sup]4 and \RA[sub]4 is satisfied
-- say, $r \diamond (s \diamond t) \subseteq (r \diamond s) \diamond t$ --
then it suffices to compute the ``finer'' composition result $r \diamond (s \diamond t)$
and check whether it contains $r'$.
Otherwise, both results have to be computed and checked for containment of $r'$.

In case neither \RA[sup]4 not \RA[sub]4 is satisfied,
it is possible to establish associativity,
at the cost of coarsening the composition operation and thereby reducing the ``information content'' inherent in the composition table,
but with the benefit of providing a sound approximation with the advantages discussed so far.
% The procedure for establishing associativity can be summarized as follows.
This can be done as follows.
If there is some triple $(r,s,t)$ of base relations that violates associativity, for example, $r \diamond (s \diamond t) \subsetneq (r \diamond s) \diamond t$,
then enrich the table entries that are used to compute $r \diamond (s \diamond t)$ with the base relations that are missing to reach $r \diamond (s \diamond t) = (r \diamond s) \diamond t$.
Repeat this step until no more violations are found. This procedure terminates -- in the worst case, it runs until all entries contain the universal relation.
The resulting table represents an abstract composition operation
that satisfies associativity, but its ``information content'' is lower than that of the original table because, by adding relations to certain entries,
the disjunctions represented by these entries have been enlarged.

\myparagraph{\RA6 and \RA{6l}.}
Axioms \RA6 and \RA{6l} do not seem to play a significant role in (optimizing) satisfiability checking,
but the presence of an \id relation is needed for the standard reduction from the correspondence problem to satisfiability:
to test whether a constraint system admits the equality of two variables $x,y$,
one can add an \id-constraint between $x,y$ and test the extended system for satisfiability.

Furthermore, the absence of an \id relation may lead to an earlier loss of precision.
For example, assume two variants of the 1D Point Calculus \cite{Vilain:1989qv}:
PC$_{=}$ with the relations \emph{less than} ($<$), \emph{equal} ($=$), and \emph{greater than} ($>$),
interpreted as the natural relations $<,=,>$ over the domain of the reals,
and its approximation PC$_{\approx}$ with the relations \emph{less than} ($<$),
\emph{approximately equal} ($\approx$), and \emph{greater than} ($>$),
where $\approx$ is interpreted as the set of pairs of points whose distance is below a certain threshold.
Then, $=$ is the \id-relation of PC$_{=}$ and $=\diamond=$ results in $\{=\}$,
whereas PC$_{\approx}$ has no \id-relation and $\approx\diamond\approx$ results in the universal relation.

\myparagraph{\RA7 and \RA9.}
These axioms allow for certain optimizations in decision procedures for satisfiability based
on algebraic operations like algebraic closure.
If \RA7 holds, the reasoning system does not need to store both constraints $A\,r\,B$ and $B\,r'\,A$,
since $r'$ can be reconstructed as $r\breve{~}$ if needed. 
Similarly, \RA9 grants that, when enforcing algebraic closure by using Equation (\ref{eq:rel-refinement2})
to refine constraints between variable $A$ and $B$, it is sufficient to compute composition
once and, after applying converse, reuse it to refine the constraint between $B$ and $A$ too. 

% an alternative way of computing the composition for enforcing algebraic closure
%that are implemented in reasoners such as \SparQ \cite{WFW+06,Wol09}.
%For example, consider the constraint network
%% \[
%$
%    \{ApB, BqC, CrD, DsA, AtC\}
%$
%% \]
%with nodes $A,B,C,D$ and relations $p,q,r,s,t$.
%To test its consistency, it suffices to check whether the pair $(A,C)$ can be in relation $p \diamond q$, $(r \diamond s)\breve{~}$, and $t$
%at the same time, i.e., whether the intersection of the two composite relations contains $t$.
%To compute 
%% $p \diamond q$, one lookup in the composition table suffices.
%% For
%$(r \diamond s)\breve{~}$, a na\"ive procedure would perform two lookups in the composition and converse table.
%However, exploiting \RA9, an alternative approach pre-computes $r\breve{~}$ and $s\breve{~}$
%and then looks up their composition in the table. This pays off if the network is dense
%and more inverses of compositions with $r$ and/or $s$ need to be computed in the same process.
%%
%(A similar argument goes through for \RA7 if, in the above example, the edge $CrD$ is replaced by $Cr'E$ and $Dr''E$.)

Current reasoning algorithms and their implementations use the described optimizations;
they produce incorrect results for calculi violating \RA7 or \RA9.

\myparagraph{\RA{10} and \PL.}
These axioms reflect that the relation symbols of a calculus
indeed represent binary relations, i.e., pairs of elements of a universe.
This can be explained from two different points of view.
\begin{enumerate}
  \item
    If binary relations are considered as sets, \RA{10} is equivalent to 
%     \[
    $
      r\breve{~} \diamond \overline{r \diamond s} \subseteq \bar s.
    $
%     \]
    If we further assume the usual set-theoretic interpretation of the composition of two relations,
    the above inclusion reads as:
%     \begin{quote}
    \emph{%
      For any $X,Y$,
      if $Z\,r\,X$ for some $Z$ and, $Z\,r\,U$ implies not $U\,s\,Y$ for any $U$,
      then not $X\,s\,Y$.}
%    \end{quote}
    This is certainly true because $X$ is one such $U$.
  \item
    Under the same assumptions,
    each side of \PL says (in a different order) that there can be no triangle $X\,r\,Y, Y\,s\,Z, Z\,t\,X$.
    The equality then means that the ``reading direction'' does not matter, see also \cite{Due05}.
    This allows for reducing nondeterminism in the a-closure procedure,
    as well as for efficient refinement and enumeration of consistent scenarios.
\end{enumerate}
%     \todo{partition scheme $+$ weak converse $\Rightarrow$ \PL?}        

\subsection{Prerequisites for Being a Relation Algebra}
The following correspondence between properties of a calculus
and notions of a relation algebra is due to Ligozat and Renz \cite{LigozatR04}.
\begin{proposition}
  Every calculus $C$ based on a partition scheme is an NA.
  If, in addition, the interpretations of the base relations are serial,
  then $C$ is an SA.
\end{proposition}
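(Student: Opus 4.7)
The plan is to verify each NA axiom for the algebra $(2^\Rel, \cup, \bar{~}, \diamond, \breve{~}, \id)$ induced by the calculus. By Fact \ref{fact:minimal_axiom_set_implied_by_calculus_def}, \RA1--\RA3, \RA5, \RA[sup]7 and \RA8 come for free, so only the equalities \RA6, \RA7, \RA9 and \RA{10} remain to be established. The partition-scheme hypothesis supplies the two ingredients missing from the abstract-partition-scheme setting: a base relation \id whose interpretation is the set-theoretic identity, and strong converse ($\varphi(r)\breve{~} = \varphi(s)$ for some $s \in \Rel$). I would also rely on weak composition, as implicitly used in \cite{LigozatR04}. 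Together with Fact \ref{fact:phi-injective} and JEPD this makes $\varphi$ an injective Boolean embedding of $2^\Rel$ into $2^{\Univ\times\Univ}$ with $\varphi(\bar R) = \univ \setminus \varphi(R)$ and, by Fact \ref{fact:weak+strong_conv+comp_general}, $\varphi(R\breve{~}) = \varphi(R)\breve{~}$ for every $R \subseteq \Rel$.

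Each of the four remaining axioms then reduces to a set-theoretic identity on $\Univ \times \Univ$ which is pulled back via injectivity of $\varphi$. Axiom \RA6 follows from $\varphi(r) \circ \varphi(\id) = \varphi(r)$ together with JEPD, since $r$ is the unique base relation meeting $\varphi(r)$. Axiom \RA7 is the involutivity of set-theoretic converse combined with strong converse. For \RA9, both sides of $(R \diamond S)\breve{~} = S\breve{~} \diamond R\breve{~}$ reduce under $\varphi$ to the minimal union of base-relation interpretations containing $\varphi(S)\breve{~} \circ \varphi(R)\breve{~}$, using that converse commutes with union and reverses composition. The main obstacle will be \RA{10}: its set-theoretic analogue $\rho\breve{~} \circ \overline{\rho \circ \sigma} \subseteq \bar{\sigma}$ is classical, but lifting it requires reconciling the abstract complement inside $\overline{R \diamond S}$ with the set-theoretic complement of $\varphi(R) \circ \varphi(S)$. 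These differ whenever weak composition strictly over-approximates $\circ$, but the inclusion $\varphi(\overline{R \diamond S}) \subseteq \univ \setminus (\varphi(R) \circ \varphi(S))$ runs in the favourable direction, so the standard triangle-chase in $\Univ \times \Univ$ still yields the required containment, which then pulls back through $\varphi$.

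Finally, for the SA claim under seriality, each $\varphi(r) \circ \univ = \univ$, hence $\varphi(R) \circ \univ = \univ$ for every non-empty $R$; by weak composition this forces $R \diamond 1 = 1$ for every non-empty $R$, so $(r \diamond 1) \diamond 1 = 1 \diamond 1 = 1 = r \diamond 1$ for non-empty $r$, with the empty case being immediate. This upgrades \SA[sup] to the full equality \SA, completing the proof.
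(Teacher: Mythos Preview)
The paper does not supply its own proof of this proposition; it merely attributes the result to Ligozat and Renz \cite{LigozatR04} and states it without argument. Your sketch is therefore not competing with anything in the paper, and it follows the standard route from \cite{LigozatR04}: use the injective Boolean embedding $\varphi$ together with strong converse (forced by the partition-scheme hypothesis) and weak composition to reduce each of \RA6, \RA7, \RA9, \RA{10} to its set-theoretic counterpart on $\Univ\times\Univ$.

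Your explicit flagging of weak composition as an extra hypothesis is correct and important. The paper's Definition~\ref{def:JEPD+partition_schemes} of a partition scheme constrains only $\id$ and converse, not composition, so under the paper's own Definition~\ref{def:qualitative_calculus} a calculus ``based on a partition scheme'' could still have merely abstract composition, in which case \RA6 and \RA{10} need not hold. The Ligozat--Renz construction tacitly takes $\diamond$ to be weak composition, and your proof uses it exactly where it is needed: for \RA6 (so that $r\diamond\id$ is the \emph{smallest} $T$ with $\varphi(T)\supseteq\varphi(r)$), for \RA9 (so that both sides are the minimal cover of $(\varphi(r)\circ\varphi(s))\breve{~}$; here you should also note that strong converse makes $\breve{~}$ a permutation of $\Rel$ and hence a Boolean automorphism commuting with $\bigcap$), and for the ``favourable inclusion'' $\varphi(\overline{R\diamond S})\subseteq\univ\setminus(\varphi(R)\circ\varphi(S))$ in \RA{10}. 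The \SA argument, by contrast, does not actually need weak composition: seriality plus abstract composition already gives $\varphi(r\diamond 1)\supseteq\varphi(r)\circ\univ=\univ$, forcing $r\diamond 1=1$ for every base relation $r$, which suffices.
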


Furthermore, \RA7 is equivalent to the requirement that a calculus has strong converse.
This is captured by the following lemma.

\begin{lemma}
  \label{lem:R7_and_strong_converse}
  Let $C = (\Rel,\Int,\breve{},\diamond)$ be a qualitative calculus.
  Then the following properties are equivalent.
%   \begin{Enumerate}
%     \item
%       If $C$ has weak converse, then Axiom \RA{7w} is satisfied for all relations $R \subseteq \Rel$.
%     \item
%       The following are equivalent.
      \begin{Enumerate}
        \item
          $C$ has strong converse.
        \item
          Axiom \RA{7} is satisfied for all base relations $r \in \Rel$.
        \item
          Axiom \RA{7} is satisfied for all relations $R \subseteq \Rel$.
      \end{Enumerate}
%   \end{Enumerate}
\end{lemma}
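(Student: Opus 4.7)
My plan is to prove the chain $(1)\Rightarrow(3)\Rightarrow(2)\Rightarrow(1)$, observing first that the implication $(3)\Rightarrow(2)$ is immediate: every base relation $r$ can be identified with the singleton $\{r\}\subseteq\Rel$, so \RA{7} for all $R$ specializes to \RA{7} for all $r$. The real content lies in the remaining two implications, which amount to a careful interplay between the calculus-level converse $\breve{~}:\Rel\to 2^\Rel$ and the set-theoretic converse on binary relations over $\Univ$.

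For $(1)\Rightarrow(3)$, I will use Fact \ref{fact:weak+strong_conv+comp_general}, which lifts strong converse from base relations to arbitrary $R\subseteq\Rel$: $\varphi(R\breve{~})=\varphi(R)\breve{~}$. Applying this identity twice yields
\[
\varphi\bigl((R\breve{~})\breve{~}\bigr)=\varphi(R\breve{~})\breve{~}=\bigl(\varphi(R)\breve{~}\bigr)\breve{~}=\varphi(R).
\]
To conclude $(R\breve{~})\breve{~}=R$, I need injectivity of $\varphi$ on subsets of $\Rel$. Fact \ref{fact:phi-injective} states $\varphi$ is injective on base relations, and the extension to subsets is also injective because base interpretations are non-empty and pairwise disjoint: if $\varphi(R_1)=\varphi(R_2)$ and $r\in R_1$, then the non-empty set $\varphi(r)$ must lie inside some $\varphi(r')$ with $r'\in R_2$, forcing $r=r'$ by JEPD. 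I will note this briefly before invoking injectivity.

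The main obstacle is $(2)\Rightarrow(1)$, where we must strengthen the one-sided inclusion $\varphi(r\breve{~})\supseteq\varphi(r)\breve{~}$ from abstract converse into an equality using \RA{7} alone. The trick is to apply Equation \eqref{eq:abstract_converse_general} a second time, now to the relation $r\breve{~}$:
\[
\varphi\bigl((r\breve{~})\breve{~}\bigr)\supseteq\varphi(r\breve{~})\breve{~}.
\]
By \RA{7} the left-hand side equals $\varphi(r)$, so $\varphi(r)\supseteq\varphi(r\breve{~})\breve{~}$. Taking set-theoretic converse of both sides (which is involutive on binary relations) gives the missing inclusion $\varphi(r)\breve{~}\supseteq\varphi(r\breve{~})$. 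Combined with the abstract converse inequality, this yields $\varphi(r\breve{~})=\varphi(r)\breve{~}$, i.e.\ strong converse. The potential pitfall is keeping the two meanings of $\breve{~}$ straight and applying the generalized abstract converse \eqref{eq:abstract_converse_general} (rather than only the base-relation version) to the compound relation $r\breve{~}$; I will flag this explicitly.
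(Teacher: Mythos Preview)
Your proof is correct and follows essentially the same approach as the paper: both directions hinge on applying the abstract-converse inequality a second time (to $r\breve{~}$) and then invoking injectivity of $\varphi$ via JEPD. The only organizational differences are that the paper handles $(2)\Leftrightarrow(3)$ separately via distributivity and proves $(2)\Rightarrow(1)$ by contrapositive, whereas you run the cycle $(1)\Rightarrow(3)\Rightarrow(2)\Rightarrow(1)$ and give a direct argument for $(2)\Rightarrow(1)$; the underlying computation is the same.
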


\begin{proof}
      Items (2) and (3) are equivalent due to distributivity of $\breve{~}$ over $\cup$,
      which is introduced with the cases for non-base relations in Definition \ref{def:qualitative_calculus}.
      
      For ``(1) $\Rightarrow$ (2)'',
      the following chain of equalities, for any $r \in \Rel$, is due to $C$ having strong converse:
%       \[
      $
          \varphi(r\breve{~}\breve{~}) = \varphi(r\breve{~})\breve{~} = \varphi(r)\breve{~}\breve{~} = \varphi(r).
      $
%       \]
      Since \Rel is based on JEPD relations and $\varphi$ is injective, this implies that $r\breve{~}\breve{~} = r$.
      
      For ``(2) $\Rightarrow$ (1)'',
      we show the contrapositive.
      Assume that $C$ does not have strong converse.
      Then $\varphi(r\breve{~}) \supsetneq \varphi(r)\breve{~}$, for some $r \in \Rel$;
      hence $\varphi(r\breve{~})\breve{~} \supsetneq \varphi(r)\breve{~}\breve{~}$.
      We can now modify the above chain of equalities replacing the first two equalities
      with inequalities, the first of which is
      due to Requirement \eqref{eq:abstract_converse} in the definition of the converse (Def.\ \ref{def:qualitative_calculus}):
%       \[
      $
          \varphi(r\breve{~}\breve{~}) \supseteq \varphi(r\breve{~})\breve{~} \supsetneq \varphi(r)\breve{~}\breve{~} = \varphi(r).
      $
%       \]
      Since $\varphi(r\breve{~}\breve{~}) \neq \varphi(r)$,
      we have that $r\breve{~}\breve{~} \neq r$.
      \qed
%   \end{enumerate}
\end{proof}

\section{Algebraic Properties of Existing Calculi}
\label{sec:properties}

In this section, we report on tests for algebraic properties we have performed on spatio-temporal calculi.
We want to answer the following questions.
\emph{(1) Which existing calculi correspond to relation algebras?}
\emph{(2) Which weaker notions of relation algebras correspond to calculi that do not fall under (1)?}

We examined the corpus of the 31 calculi\footnote{For the parametrized calculi DRA, OPRA, QTC, we count every variant separately.}
listed in Table \ref{tab:calculi}. This selection is restricted to calculi with
(a) binary relations -- because the notion of a relation algebra is best understood for binary relations --
and (b) an existing implementation in \SparQ.

To answer Questions (1) and (2), we use the axioms for relation algebras listed in Table \ref{tab:relation_algebra_axioms}
using both the heterogeneous tool set \Hets \cite{MML07} and \SparQ.
Due to Fact \ref{fact:minimal_axiom_set_implied_by_calculus_def},
it suffices to test Axioms \RA4, \RA6, \RA7, \RA9, \RA{10} (or \PL)
and, if necessary, the weakenings \SA, \WA, and \RA{6l}.
The weakenings are relevant 
to capture weaker notions such as semi-associative or weakly associative algebras,
or algebras that violate either \RA6 or some of the axioms that imply the equivalence
of \RA6 and \RA{6l}.
Because all axioms except \RA{10} contain only operations that distribute over the union $\cup$,
it suffices to test them for base relations only.
Therefore, we have written a CASL specification of \RA4, \RA6, \RA7, \RA9, \PL, \SA, \WA, and \RA{6l},
and used a \Hets parser that reads the definitions of the above listed calculi in \SparQ
to test them against our CASL specification.
In addition, we have tested all definitions against \RA4, \RA6, \RA7, \RA9, \PL, and \RA{6l}
using \SparQ's built-in function \texttt{analyze-calculus}.

A part of the calculi have already been tested
by Florian Mossakowski \cite{Mos07}, % TS: ``diploma'' ist mehrdeutig; im AE bezeichnet ``diploma'' eine allgemeine Abschlussurkunde.
using a different CASL specification based on an equivalent axiomatization from \cite{LigozatR04}.
He comprehensively reports on the outcome of these tests, and on repairs made to the composition table where possible.

The results of our and Mossakowski's tests are summarized in Table \ref{tab:calculi_tests}; details are listed in 
the appendix.
%\cite{DMSW13}.
\begin{table}[t]
  \addtolength{\belowcaptionskip}{-16pt}
  \centering
  \begin{small}
    \rowcolors{1}{lightblue}{}%
    \setcounter{myfn}{0}%
    \begin{tabular}{ll*{9}c}
      \hline\rowcolor{medblue}\rule{0pt}{9pt}%
      Calculus                                & Tests\myfnm{fnTests} 
                                                     & ~\RA4~ & ~\SA~ & ~\WA~ & ~\RA6~ & ~\RA{6l}~ & ~\RA7~ & ~\RA9~ & ~\PL~ & ~\RA{10}~ \\
      \hline\rule{0pt}{9pt}%
%     9-Intersection                          & \HS  & \YES   & \YES  & \YES  & \YES   & \YES      & \YES   & \YES   & \YES  & \YES      \\
%     Abs.\ Distance                          & \MHS & \YES   & \YES  & \YES  & \YES   & \YES      & \YES   & \YES   & \YES  & \YES      \\
      Allen                                   & \MHS & \YES   & \YES  & \YES  & \YES   & \YES      & \YES   & \YES   & \YES  & \YES      \\
      Block Algebra                           & \HS  & \YES   & \YES  & \YES  & \YES   & \YES      & \YES   & \YES   & \YES  & \YES      \\
      Cardinal Direction \emph{Calculus}      & \MHS & \YES   & \YES  & \YES  & \YES   & \YES      & \YES   & \YES   & \YES  & \YES      \\
      CYC${}_\text{b}$, Geometric Orientation & \HS  & \YES   & \YES  & \YES  & \YES   & \YES      & \YES   & \YES   & \YES  & \YES      \\
%     Dependency Calc.\                       & \MHS & \YES   & \YES  & \YES  & \YES   & \YES      & \YES   & \YES   & \YES  & \YES      \\
      DRA${}_{\text{fp}}$, DRA-conn.          & \HS  & \YES   & \YES  & \YES  & \YES   & \YES      & \YES   & \YES   & \YES  & \YES      \\
      Point Calculus                          & \HS  & \YES   & \YES  & \YES  & \YES   & \YES      & \YES   & \YES   & \YES  & \YES      \\
      RCC-5, Dependency Calc.                 & \MHS & \YES   & \YES  & \YES  & \YES   & \YES      & \YES   & \YES   & \YES  & \YES      \\
      RCC-8, 9-Intersection                   & \MHS & \YES   & \YES  & \YES  & \YES   & \YES      & \YES   & \YES   & \YES  & \YES      \\
      STAR${}_4$                              & \HS  & \YES   & \YES  & \YES  & \YES   & \YES      & \YES   & \YES   & \YES  & \YES      \\%[8pt]
      \hline\rule{0pt}{9pt}%
%                                             &      &        &       &       &        &           &        &        &       &           \\
%       DRA${}_{\text{c}}$                      & \MHS & ~~~~5  & \YES  & \YES  & \YES   & \YES      & \YES   & \YES   & \YES  & \YES      \\
      DRA${}_{\text{f}}$                      & \MHS & ~~\,19 & \YES  & \YES  & \YES   & \YES      & \YES   & \YES   & \YES  & \YES      \\
      INDU                                    & \MHS & ~~\,12 & \YES  & \YES  & \YES   & \YES      & \YES   & \YES   & \YES  & \YES      \\
      OPRA${}_n$, $n\leqslant8$               & \MHS & \PC21--\PC91\myfnm{fnOPRA}
                                                              & \YES  & \YES  & \YES   & \YES      & \YES   & \YES   & \YES  & \YES      \\%[8pt]
      \hline\rule{0pt}{9pt}%
%                                             &      &        &       &       &        &           &        &        &       &           \\
      QTC$_{\text{Bxx}}$                      & \MHS & \YES   & \YES  & \YES  & \multicolumn{2}{c}{89--100} %\multicolumn{2}{>{\columncolor{lightblue}}c}{89--100}
                                                                                                   & \YES   & \YES   & \YES  & \YES      \\%[8pt]
      \hline\rule{0pt}{9pt}%
%                                             &      &        &       &       &        &           &        &        &       &           \\
      QTC$_{\text{C21}}$                      &  \HS & ~~\,55 & \YES  & \YES  & 99     & 99        & \YES   & ~\,2   & $<$1  & ~\,1      \\
      QTC$_{\text{C22}}$                      &  \HS & ~~\,79 & \YES  & \YES  & 99     & 99        & \YES   & ~\,3   & $<$1  & ~\,1      \\%[8pt]
      \hline\rule{0pt}{9pt}%
%                                             &      &        &       &       &        &           &        &        &       &           \\
      Rectang.\ Direction Relations         & \HS  & \YES   & \YES  & \YES  & 97     & 92        & 89     & 66     & ~\,7  & 52        \\
      \hline\rule{0pt}{9pt}%
%                                             &      &        &       &       &        &           &        &        &       &           \\
      Cardinal Direction \emph{Relations}  & \HS  & ~~\,28 & 17    & \YES  & 99     & 99        & 98     & 12     & $<$1  & 88        \\
      \hline
    \end{tabular}
    \par
    \rowcolors{1}{}{}%
    \begin{tabular}{p{.88\textwidth}}
      \myfn{fnTests}{calculus was tested by: M $=$ \cite{Mos07}, H $=$ \Hets, S $=$ \SparQ} \\
      \myfn{fnOPRA}{21\%, 69\%, 78\%, 83\%, 86\%, 88\%, 90\%, 91\% for OPRA${}_n$, $n = 1,\dots,8$}           \\
    \end{tabular}
    \par
  \end{small}
  \par\smallskip
  \caption{%
    Overview of calculi tested and their properties. The symbol ``\Yes'' means that the axiom is satisfied;
    otherwise the percentage of counterexamples (relations, pairs or triples violating the axiom) is given.%
  }
  \label{tab:calculi_tests}
\end{table}
With the exceptions of QTC, Cardinal Direction Relations (CDR) and Rectangular Direction Relations (RDR),
all tested calculi are at least semi-associative relation algebras;
most of them are even relation algebras.
Hence, these calculi enjoy the advantages for representation and reasoning optimizations discussed in Section \ref{sec:axiom_discussion}.
In particular, current reasoning procedures, which already implement the optimizations described for \RA7 and \RA9,
yield correct results for these calculi, and they could be optimized further by implementing the optimizations described
for \RA4, \RA{10}, and \PL.

The three groups of calculi that are SAs but not RAs are the Dipole Calculus variants DRA${}_{\text{f}}$
(variants DRA${}_{\text{fp}}$ and DRA-connectivity are even RAs!), as well as INDU and OPRA${}_m$ for $m=1,\dots,8$.
These calculi do not even satisfy one of the inclusions $\RA[sup]4$ and $\RA[sub]4$,
which implies that the reasoning optimizations described in Section \ref{sec:axiom_discussion} for Axiom \RA4
cannot be applied, but this is the only disadvantage of these calculi over the others.
Our observations suggest that the meaning of the letter combination ``RA'' in the abbreviations
``DRA'' and ``OPRA'' should stand for ``Reasoning Algebra'', not for ``Relation Algebra''.

In principle, it cannot be completely ruled out that associativity is reported to be violated due to errors in either
the implementation of the respective calculus or the experimental setup. This even applies to non-violations,
although it is much more likely that errors cause sporadic violations than systematic non-violations.
In the case of DRA${}_{\text{f}}$, INDU and OPRA${}_m$, $m=1,\dots,8$,
the relatively high percentage of violations make implementation errors seem unlikely to be the cause.
However, to obtain certainty that these calculi indeed violate \RA4,
one has to find concrete counterexamples and verify them using the original definition of the respective calculus.
For DRA${}_{\text{f}}$ and INDU, this has been done in the literature \cite{MoratzEtAl2011,BCL06}.
Interestingly, the violation of associativity has been attributed to the absence of strong converse and strong composition, respectively.
We remark, however, that the latter cannot be responsible
because, for example, DRA${}_{\text{fp}}$ has an associative, but only weak, composition operation. While DRA${}_{\text{fp}}$ has been proven to be associative
due to strong composition in \cite{MoratzEtAl2011},
for OPRA${}_m$, it can be shown that \emph{none} of the variants for any $m$ are associative (see \cite{MossakowskiMoratzLuecke}).

The $B$-variants of QTC violate only the identity law \RA6 and \RA{6l}.
As observed in \cite{Mos07}, it is possible to equip them with a new \id relation,
modify the interpretation of the other relations such that they become JEPD,
and adapt the converse and composition table accordingly. The thus modified calculi
are then relation algebras.

The $C$-variants of QTC additionally violate \RA4, \RA9, \RA{10}, and \PL.
We call the corresponding notion of algebra semi-associative Boolean algebra with converse-involution.
As a consequence, most of the reasoning optimizations described in Section \ref{sec:axiom_discussion}
cannot be applied to the $C$-variants of QTC; hence, reasoning with these calculi is expected to be less efficient
than with the calculi described so far.
It is possible that the noticeably few violations of \RA9, \RA{10}, and \PL are due to errors in the composition table;
the non-trivial verification is part of future work.
% \todo{TS: and I still find the difference between the c- and b-versions suspicious -- but Frank is checking their definitions. I'm also suspicious about CDC.}

Cardinal Direction Relations and Rectangular Direction Relations are the only calculi with weak converse that we have tested.
The former satisfies only \WA in addition to the axioms that are always satisfied by a Boolean algebra with distributivity.
We call the corresponding notion of algebra weakly associative Boolean algebra.
Hence, this calculus enjoys none of the advantages for representation and reasoning discussed in Section \ref{sec:axiom_discussion}.
Similarly to the $C$-variants of QTC, the relatively small number of violations of \PL may be due to errors in the implementation.
Rectangular Direction Relations additionally satisfies \RA4 and therefore corresponds to what we call an associative Boolean algebra.
Since both calculi satisfy neither \RA7 nor \RA9, current reasoning algorithms and their implementations
yield incorrect results for them, as seen in Section \ref{sec:axiom_discussion}.

An overview of the algebra notions identified is given in Figure \ref{fig:algebra_notions}.
\begin{figure}[t]
\centering
\addtolength{\abovecaptionskip}{-10pt}
  \includegraphics[width=0.9\textwidth,trim=37 264 37 0,clip]{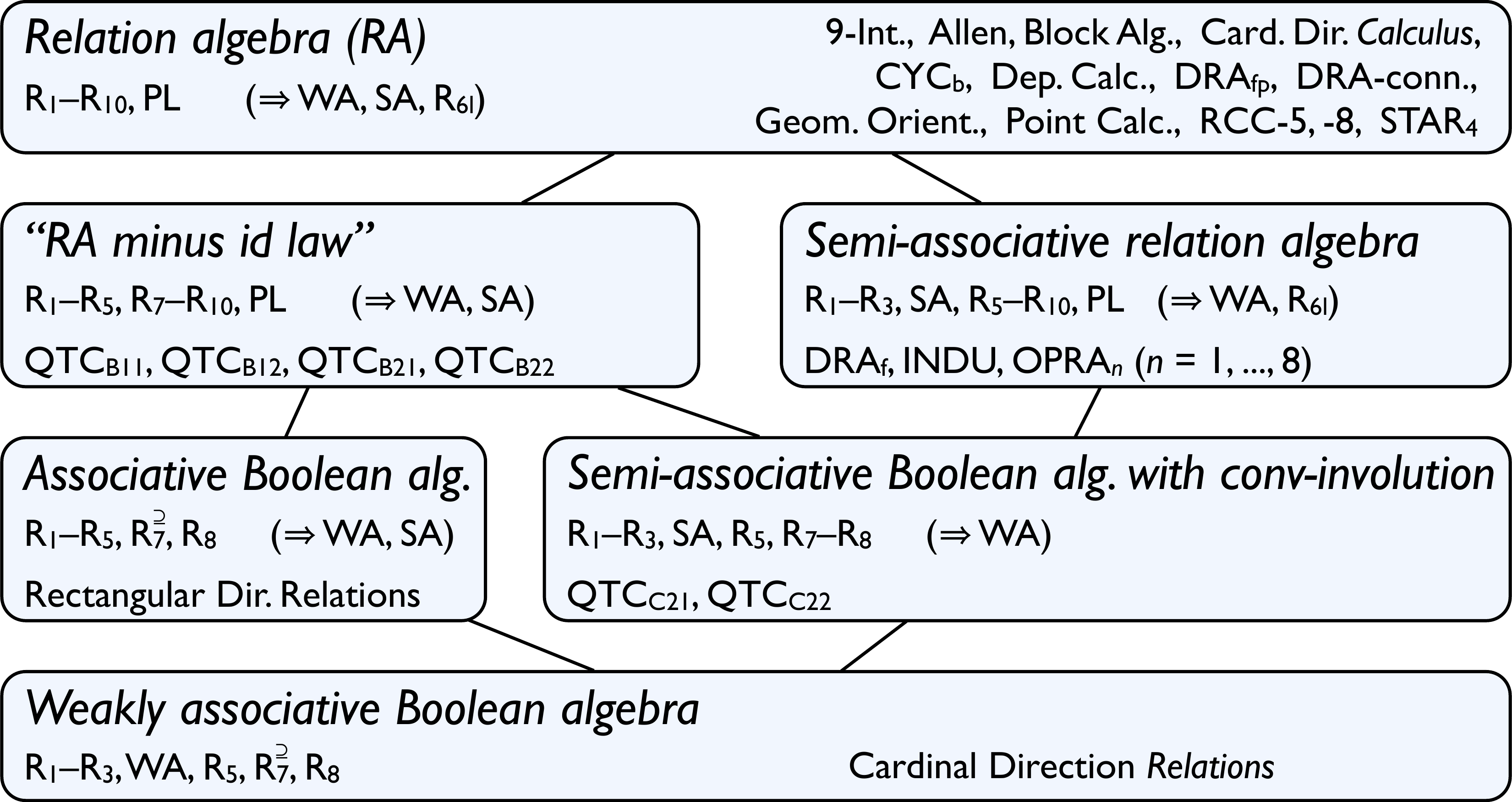}
  \caption{Overview of algebra notions and calculi tested}
  \label{fig:algebra_notions}
\end{figure}

\begin{algorithm}
\caption{Universal algebraic closure algorithm \label{alg:aclosure}}
\begin{algorithmic}[1]
\Function{lookup}{$(C,i,j,s)$}
	\If{$s \vee (i<j)$}
		\State{
		\Return $C_{i,j}$}
	\Else
		\State{\Return $C_{j,i}\breve{~}$}
	\EndIf
\EndFunction 
\vspace*{1ex}
\Function{revise}{$(C,i,j,k,s)$}
		\State{$u \gets \mathbf{false}$} \Comment{update flag to signal whether relation was updated}
		\State{$r \gets C_{i,j} \cap \mathsc{lookup}(C,i,k,s) \diamond \mathsc{lookup}(C,k,j,s)$}
		\If{$s\ \vee$ \RA{9} does not hold}\label{alg:ra9start}
		\State{$r' \gets \mathsc{lookup}(C,j,i,s) \cap (\mathsc{lookup}(C,j,k,s) \diamond \mathsc{lookup}(C,k,i,s))$}
		\State{$r \gets r \cap r'\breve{~}$}
		\State{$r' \gets r' \cap r\breve{~}$}
		\If{$r'\neq C_{j,i}$}
			\State{\textbf{assert} $r'\neq\emptyset$}\Comment{stop if inconsistency is detected}
			\State{$u \gets \mathbf{true}$}
			\State{$C_{j,i} \gets r'$}
		\EndIf \label{alg:ra9end}
		\EndIf
		\If{$r\neq C_{i,j}$}
			\State{\textbf{assert} $r\neq\emptyset$}\Comment{stop if inconsistency is detected}
			\State{$u \gets \mathbf{true}$}
			\State{$C_{i,j} \gets r$}
		\EndIf	\State{\Return $(C,u)$}
\EndFunction
\vspace*{1ex}
\Function{a-closure}{$(n,\{x_1\, r_1\, y_1, \ldots , x_m\, r_m\, y_m\})$}
\If{\RA{7} does not hold}
	\State{$s \gets \mathbf{true}$}  \Comment{without \RA{7} we must store converse relations}
	\State{$C_{i,j} \gets \Univ, i=1,\ldots,n, j=1,\ldots,n$}
\Else
	\State{$s \gets \mathbf{false}$} \Comment{for small calculi/CSPs storing converses may be more efficient}
	\State{$C_{i,j} \gets \Univ, i=1,\ldots,n, j=i+1,\ldots,n$}\Comment{use triangular matrix storage}
\EndIf
	\State{$C_{i,i} \gets \id, i=1,\ldots,n$}
\For{$i=1,\ldots, m$} 
	\State{$x\gets x_i,\; r\gets r_i,\; y\gets y_i$} \Comment{process constraint $x_i\,r_i\,y_i$}
	\If {$\neg s \wedge (x > y)$}
		\State{$(x,y) \gets (y,x),\, r \gets r\breve{~}$} \Comment{only write into upper half of matrix}
	\EndIf
	\State{$C_{x,y} \gets C_{x,y} \cap r$}
			\State{\textbf{assert} $(x=y) \to (\id \in C_{x,y})$}
%	\If{$x=y \wedge \id \not\in C_{x,y}$}
%	 \State{\Return ``inconsistent''}
%	 \EndIf
\EndFor
%
%\State{$Q \gets \mathsc{emptyQueue}$} \Comment{use priority queue with heuristic}
%\For{$i=1,\ldots, n, j=i+1,\ldots, n$}
%	\State{$Q \gets \mathsc{enqueue}(Q,(i,j))$}
%\EndFor
\State{$\mathit{update} \gets \mathbf{true}$}
%\State{$Q \gets \mathsc{makeQueue}$}\Comment{use priority queue with heuristic}
%\For{$i=1,\ldots,n, j=i+1,\ldots,n$}
%	\State{$Q \gets \mathsc{enqueue}(Q,(i,j))$}
%\EndFor
%
\While{$\mathit{update}$} %$Q$ is not empty
	\State{$\mathit{update} \gets \mathbf{false}$}
%	\State{(i,j) \gets \mathsc{dequeue}(Q)}
	\For{$i=1,\ldots ,n, j=i\!+\!1,\ldots, n, k=1,\ldots n, k\neq i, k\neq j$}
		\State{$(u,C) \gets \mathsc{revise}(C,i,j,k,s)$}
		\State{$\mathit{update} \gets \mathit{update}\vee u$}
	\EndFor
%	\For{$k=1,\ldots, n, k\neq i, k\neq j}
%		\State{$(u,C) \gets \mathsc{revise}(C,i,j,k,s)$}
%	\EndFor
\EndWhile
\State{\Return{$C$}}\Comment{fix point reached}
\EndFunction
%
%\If{$\id \not\in r_{i,i}, i=1,\ldots, n$} \Comment{check trivial case}
%	\State{\Return{``inconsistency detected''}}
%\EndIf
%%
%\
%%
%\For{$i=1,2,\ldots, n$} \Comment{initialize constraint matrix $C$}
%\If{\RA{7} holds} \Comment{we don't need converses to be stored}
%	\State{$j_0 \gets i+1$}
%\Else
%	\State{$j_0 \gets 1$}
%\EndIf
%	\For{$j=j_0,j_0+1, \ldots, n$}
%		\State{$C_{i,j} \gets r_{i,j} \cap r_{j,i}\breve{~}, i=1,\ldots, n, j=i,\ldots, n$}
%		\If{\RA{7} holds}
%			\State{$k_0 \gets j+1$}
%		\Else
%			\State{$k_0 \gets 1$}
%		\EndIf
%		\For{$k=k_0,k_0+1, \ldots, n$}
%			\State{$Q \gets \mathrm{enqueue}(Q, (i,j,k))$}\Comment{use heuristic queue ordering}
%		\EndFor
%	\EndFor
%\EndFor
%%
%\While{$Q$ not empty}
%\State{$(i,j,k) = \mathrm{dequeue}(Q)$}
%\State{$r' \gets (C_{i,j} \diamond C_{j,k}) \cap C_{i,k}$}
%\If{$r' = \emptyset$} 
%	\State{\Return{``inconsistency detected''}}
%\ElsIf{$r' \neq C_{i,k}$}
%	\If{\RA{4} holds}
%	\For{$k'=j+1,\ldots, k-1, k+1, \ldots n$}
%		\State{$Q\gets \mathrm{enqueue}(Q,(i,j,k))$}
%	\EndFor
%	\Else
%	\For{$k'= 1,\ldots, k-1, k+1, \ldots n$}
%		\State{$Q\gets \mathrm{enqueue}(Q,(i,j,k))$}
%	\EndFor
%	\EndIf
%\EndIf
%\EndWhile
\end{algorithmic}
\end{algorithm}

% NB: different queue sizes depending on axioms---

When making using of algebraic closure as inference mechanism it is essential to acknowledge that
some axiom violations require special procedures in order to compute algebraic closure. 
Our analysis reveals that there indeed exist calculi that do not meet axioms that have been taken for granted. For example, the current version of GQR can fail to compute algebraic closure correctly for calculi that violate \RA{9}. 
In Algorithm \ref{alg:aclosure} we present a universal algorithm to compute algebraic closure.
For clarity and brevity of the presentation we stick to the well-known but simple control structure of PC-1. 
A real implementation would use an advanced control structure to avoid unnecessary invocations of the \textsc{revise} function, i.e., to use at least PC-2 \cite{mackworth-AI:77}.
Conformance with \RA{7} allows CSP storage to be restricted (flag $s$ in the algorithm), while  violation of \RA{9} requires two computations for the refinement operation Eq. \ref{eq:rel-refinement2}, namely $C_{i,j} \diamond C_(j,k)$ and $(C_{j,k}\breve{~} \diamond C_{i,j}\breve{~})\breve{~}$ (lines \ref{alg:ra9start}--\ref{alg:ra9end}). \RA{4} and \RA{10} are not used by the algorithm, since
this would complicate the algorithm unduly.

\section{A Quantitative Account of Qualitative Calculi}
In this section, we report on computational properties of specific calculi
which are beyond the computational complexity of constraint-based reasoning.
For example, one might be interested to know how many relations are typically sufficient to describe a scene of $n$ objects unequivocally or with a specific residual uncertainty.
To this end, we developed two empirical measures that characterize certain aspects of qualitative calculi that are arguably relevant to applications.
We want to answer two questions: (1) How well do calculi with many relations make use of the usually higher information content?
(2) Does information content differ significantly between the six classes of calculi established in Section \ref{sec:properties}?

%\todo{TS: I stumbled over ``some'', ``some'', and ``seem'' -- they make the sentence rather weak and indecisive. Suggestions: delete first ``some'' and perhaps add ``empirical'' and their number; delete second ``some'' or replace with ``certain''; replace ``seem'' with ``are'' and perhaps add ``arguably'' (since we do argue in the following).}

The first measure we consider is {\em information content} of the composition operation.
Our motivation is to estimate how much additional information can be gained by applying a composition operation.
This allows us to estimate whether, for example, having observed relations $r(A,B)$ and $r'(B,C)$ in a scene, it is worthwhile to observe $r''(A,C)$ too,
as it may be improbable to derive $r''$ by composition ($r\circ r'$).
%\todo{TS: can we be more concrete about what ``worthwhile'' means? E.g., ``how much additional information is gained''? This somewhat repeats the previous sentences, but when I read it, I wasn't able to understand ``supplemented''. IMO it only becomes clear if ``worthwhile'' is made more precise.}
To obtain more general results we consider sequences of compositions $r\circ r' \circ r'' \circ \ldots$ for several lengths.
%\todo{TS: I have a problem with ``expressivity'' here: it can be mistaken for ``expressivity of a logic/calculus''. How about ``more general results''?}
We define the information content $I$ of a relation $R \subseteq \Rel$ to be 
\begin{equation}
I(R)=1-\frac{|R|}{|\Rel|}
\end{equation}

\noindent
where $|\Rel|$ denotes the number of base relations of the calculus,
and $|R|$ the number of base relations $R$ consists of.
In case of the universal relation this results in $I(U)=0$ as nothing is known, 
$I(r)=1-\frac{1}{|\Rel|}$ for all base relations $r\in\Rel$, 
and $I(\emptyset)=1$.
%We define the information content to be zero for the universal relation and to be $1-\frac{1}{k}$ for one atomic relation in a calculus of $k$ base relations. 
Obviously, the more base relations a calculus involves, the higher the information content {\em can be} for base relations. 
Therefore, we define 
\begin{equation}
I_C^k = \frac{\sum_{R\in r^k} I(R)}{|\Rel|^{k+1}}
\end{equation}
for a calculus $C$  with $r^k=\{r^0\circ\ldots\circ r^k|r^0,\ldots,r^k \in \Rel\}$
to be the average information content after $k$ composition operations, %over all composition chains of length $k$ (incorporating $k+1$ relations),
i.e., how restrictive relation are on average after information propagation with composition.
In particular, $I_C^k$ is 1 minus the average proportion of base relations in any cell in the composition table.
%\todo{TS: I'd like to make it more explicit that we mean the \emph{relations obtained from applying} the composition chains. Also, I'd like to add the clarifiying sentence: "In particular, $I_C^k$ is 1 minus the average proportion of base relations in any cell in the composition table."}
For example, for QTC$_{C22}$ ($|\Rel|=209$) or the  Cardinal Direction Relations (CDR) ($|\Rel |=218$) $I^0\approx 1$, 
whereas for the Point Calculus with three base relations $I_{PC}^0\approx 0.67$.
%
%One question now is: how well do calculi with more relations make use of this and whether there are differences regarding different classes of calculi.
%\todo{TS: these are in fact two questions, and ``this'' as well as ``regarding different classes'' remain somewhat ambiguous.
%How about: ``We want to answer two questions: 
%(1) How well do calculi with many relations make use of the usually higher information content?
%(2) Does the information content differ significantly between the six classes of calculi established in Section \ref{sec:properties}?''}
%
%Since there are too many possible sequences of compositions e applied an iterative method.
%In the first step it calculates the compositions of all base relations: $\forall_{r_i,r_j\in|\Rel|} R_{ij}=r_i\circ r_j$ summarizes the occurrences of each $R_{ij}$. 
%In the next step only those relations $R$ are composed with all base which occurred at least once in the previous step: $\forall_{r_i\in\Rel}R'=R\circ r_i$.
%The occurrence of $R'$ is then the number of occurrences of $R$ in the previous step.
We apply an iterative method to derive the values of $I_C^k$ that constructs $r^k$ for $k=0,1,\ldots$ rather than looping across combinations of base relations.
%We accumulate the occurrences of a resulting relation at step $k$ over all possible sequences. 
%In step $k+1$, we accumulate for each relation $R_i^{k+1}=R_j^{k} \circ r_l$ the number of occurrences of $R_j^{k}$ in step $k$.
Despite the potentially exponential size of $r^k$, the calculation remains feasible in many cases. Only for OPRA$_m$ with $m\geq 3$ and some QTC variants we were not able to derive values for higher $k$ in reasonable time. For the other calculi, computation was terminated after 14 compositions or if $I_C^k$ drops below $0.5$.

%For cases where there are just too many \todo{TS: why not give the concrete threshold?} possible sequences of compositions, 
%i.e.~too many $R_j^{k}$ exist,
%we employ a statistical method to determine this measure.
%There, we randomly generate sequences of base relations only and determine the average information content.
%\todo{TS: how many do we generate and do we have evidence that this number guarantees a statistically significant result?}

As a second measure we determine the average degree of overlap that occurs after $k$ steps of composition for selected calculi.
%\todo{TS: ``Additionally'' $\leadsto$ ``As a second measure'', to make the transition clearer? }
The degree of overlapping $O(R_i,R_j)$ is determined by counting the number of atomic relations shared by two relations, normalized by the total number of base relations:
\begin{equation}
O(R_i,R_j) = \frac{|R_i\cap R_j|}{|\Rel|}
\end{equation}
For example, if two relations in a calculus with eight base relations share four base relations,
the overlap is $0.5$.
This value indicates how the information content differs between dealing with base relations only versus dealing with arbitrary relations (and thus how the results on information content generalize to arbitrary relations).
Similar to $I(R)$ and $I_C^k$,
 we define $O_C^k$ to be the average overlap over all composition chains of length k.
\begin{equation}
O_C^k = \frac{\sum_{R_i,R_j\in r^k} O(R_i,R_j)}{|\Rel|^{k+1}}
\end{equation}

The results of the two measures are summarized in Figure \ref{fig:information-content} 
and Table \ref{tab:informationContent}, 
showing information content versus length $k$ of composition chains.
%Note that, for calculi with many base relations, it was not possible to obtain statistically robust values for larger values of $k$ (we omit values for which less than 1\% of all combinations have been sampled).
%\todo{TS: does ``statistically robust'' mean ``statistically significant''? If so, what's the confidence level?}

\begin{figure}
\addtolength{\abovecaptionskip}{-16pt}
\addtolength{\belowcaptionskip}{6pt}
\centerline{%
\includegraphics[width=0.81\textwidth]{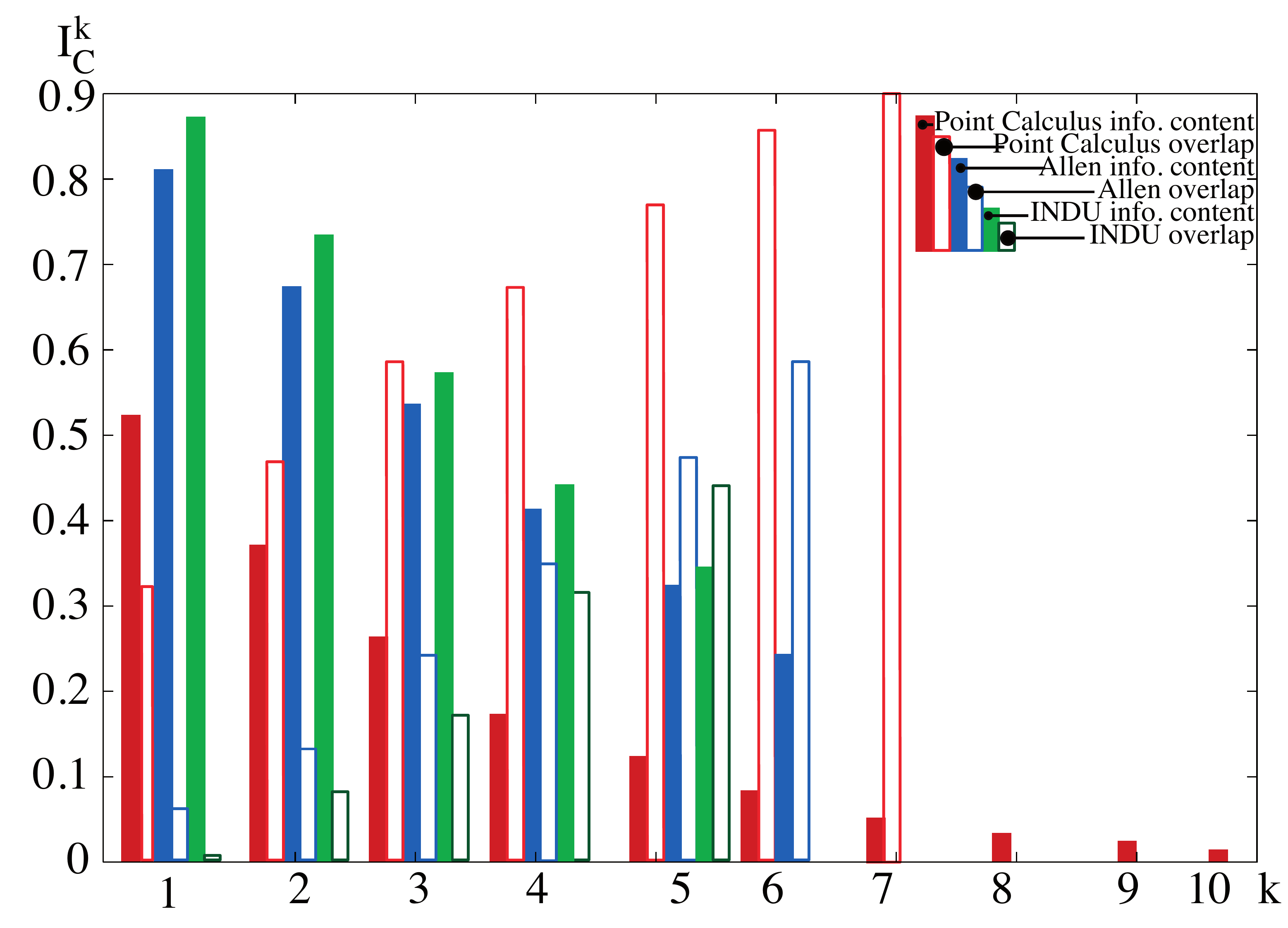}
}
\caption{Information content and overlap after $k$ compositions for selected calculi}
\label{fig:information-content}
\end{figure}

%\todo{TS: Three remarks about Figure \ref{fig:information-content}.
%(1) I'd make it unmistakably clear which axis represents $k$ and the info content, respectively,
%by either adding a label to each axis, or saying it in the caption or text.
%(2) I'd add ``information content'' to the three respective items in the legend to make the labelling consistent with the other three.
%If that's too long, use ``IC'' and define the abbreviation -- or just use $I_C^k$.
%(3) Why is $k=0$ missing? Or does $k=1$ now represent chains of length 1? That would be inconsistent with the above definition.}

\begin{table}[tb!]
  \centering
  \begin{small}
    \rowcolors{1}{lightblue}{}%
%    \begin{tabular}{l||ll||*{13}{c|}c}
    \begin{tabular}{l*{14}{r}r}
      \hline\rowcolor{medblue}\rule{0pt}{9pt}%
      Calculus
      		%& \#closure & \#steps 
		& 0 & 1 & 2 & 3 & 4 & 5 & 6 & 7 & 8 & 9 & 10 & 11 & 12  & 13 & 14\\
      \hline\rule{0pt}{9pt}%
      Allen 
	%& &
	& 92.3
	& 81.4 & 66.8 & 52.8 & 41.1 & 31.8 & 24.5 & 18.9 
	& 14.5 & 11.2 &   8.6 &   6.6 &   5.1 &    3.9 & 3.0 \\
     Block Algebra 
	%& &
	& 99.4
	& 96.5 & 89.0 & 77.7 & 65.3 & 53.4 & 43.0 & 34.1 & 27.0 & 21.1 & 16.4 & 12.8 & 9.9 & 7.7 & 5.9 \\
     CDC 
	%&&
	& 88.9
	& 76.8 & 60.4 & 44.5 & 31.6 & 21.9 & 14.9 & 10.1 & 6.8 & 4.6 & 3.1 & 2.0 & 1.4 & 0.9 & 0.6  \\
     CYC$_{b}$
	%&&
	& 75.0
	& 62.5 & 46.9 & 32.8 & 21.9 & 14.1 & 8.8 & 5.4 & 3.2 & 1.9 & 1.1 & 0.6 & 0.4 & & \\ 
     DRA$_{fp}$ 
	%&&
	& 98.8
	& 89.9 & 69.0 & 45.0 & 25.8 & 13.4 & 6.5 & 3.0 & 1.3 & 0.6 & 0.2 &  &  & \\ 
     DRA-con 
	%&&
	& 85.7
	& 74.6 & 59.0 & 43.4 & 30.4 & 20.5 & 13.5 & 8.7 & 5.6 & 3.5 & 2.2 & 1.3 & 0.8 & 0.5 & 0.3 \\
     Point Calculus
	%&&
	& 66.7
	& 51.9 & 37.0 & 25.5 & 17.3 & 11.6 & 7.8 & 5.2 & 3.5 & 2.3 & 1.5 & 1.0 & 0.7 & 0.5\\
     RCC-5
	%&&
	& 80.0
	& 56.8 & 34.9 & 19.7 & 10.6 & 5.5 & 2.7 & 1.3 & 0.6 & 0.3 &  &  &  & & \\
     RCC-8
	%& &
	& 87.5
	& 62.3 & 38.0 & 21.1 & 11.0 & 5.5 & 2.6 & 1.2 & 0.6 & 0.3 &  &  &  & \\
    STAR$_4$
	%&&
	& 88.9
	& 66.9 & 45.0 & 28.5 & 17.4 & 10.3 & 6.0 & 3.5 & 2.0 & 1.1 & 0.6 & 0.4 &  & & \\
     \hline\rule{0pt}{9pt}%
     %DRA-24
	%&&
	%& 95.8
	%& 69.9 & 29.7 & 9.9 & 3.0 & 0.9 & 0.2 &  &  &  &  &  &  & &\\
     DRA$_{f}$
	%&&
	& 98.6
	& 90.6 & 70.4 & 46.3 & 26.7 & 13.9 & 6.7 & 3.0 & 1.3 & 0.6 & 0.2 &  &  & & \\
     INDU
	%&&
	& 96.0
	& 86.9 & 72.5 & 57.5 & 44.1 & 33.2 & 24.7 & 18.2 & 13.4 & 9.9 & 7.2 & 5.3 & 4.0 & 2.9 & 2.1\\
     OPRA$_{1}$ 
	%&&
	& 95.0
	& 82.0 & 55.8 & 30.8 & 14.5 & 6.2 & 2.4 & 0.9 & 0.3 &  &  &  &  & & \\
     OPRA$_{2}$
	%&&
	& 98.6
	& 90.3 & 64.1 & 32.9 & 13.0 & 4.3 & 1.3 & 0.3 &  &  &  &  &  & & \\
     OPRA$_{3}$
	%&&
	& 99.4
	& 93.1 & 71.4 & 40.2 & 16.7 & 5.6 & & & & & & & & &\\
     OPRA$_{4}$
	%& & 
	& 99.6
	& 94.6 & 76.7 & 48.0 & & & & & & & & & & &\\
%%% FD: decided against opra-2^i as their is only two values for OPRA-8
%     OPRA$_{5}$
%	%& & 
%	& 99.8
%	& 95.6 & 80.4 & & & & & & & & & & & &\\
%     OPRA$_{6}$
%	%& & 
%	& 99.8
%	& 96.2 & 83.0 & & & & & & & & & & & &\\
%     OPRA$_{7}$
%	%& & 
%	& 99.9
%	& 96.7 & 85.1 & & & & & & & & & & & &\\
%     OPRA$_{8}$
%	%& & 
%	& 99.9
%	& 97.1 & & & & & & & & & & & & &\\
     \hline\rule{0pt}{9pt}%
%     QTC$_{B11}$
%	%& &
%	& 85.1 & 39.4 & 31.3 & 27.1 & 25.0 & 23.7 & 23.0 & 22.7 & 22.5 & 22.4 & 22.3 & 22.3 & 22.3 & 22.2 \\
%     QTC$_{B12}$
%	%& &
%	& 89.5 & 70.5 & 65.5 & 62.8 & 61.2 & 60.3 & 59.8 & 59.4 & 59.2 & 59.1 & 59.0 & 58.9 & 58.9 & 58.9 \\
     QTC$_{B11}$
	%& &
	& 88.9
	& 90.0 & 93.2 &95.8 & 97.5 & 98.6 &99.1 & 99.5 & 99.7 & & & & & & \\
     QTC$_{B12}$
	%& & 
	& 94.1
	& 91.2 & 90.5 & 91.3 & 92.8 & 94.2 & 95.6 & & & & & & & & \\
     QTC$_{B21}$
	%& &
	& 88.9
	& 0.0 & 0.0 & 0.0 & 0.0& 0.0& 0.0& 0.0& 0.0& 0.0& 0.0& 0.0& 0.0& 0.0 &  0.0 \\
     QTC$_{B22}$
	%& & %51.85	37.04	25.51	17.28	11.61	7.77	5.19	3.46	2.31	1.54	1.03	0.69	0.46	0.30
	& 96.3
	& 51.9 & 37.0 & 25.5 &17.3 & 11.6 & 7.8 & 5.2 & 3.5 &2.3 &1.5 & 1.0 & 0.7 & 0.5 & 0.3 \\
     QTC$_{C21}$
	%& & 
	& 98.8
	& 92.5 & 76.6 & 68.6 & 69.5 & 73.0 & 76.5 & 79.4 & 81.8 & 83.7 & 85.2 & 86.4 &	87.4 &&\\
	%& 92.5 &76.6 & 68.6 & 69.5 & 73.0 & 76.1 &&&&&&& &\\
     QTC$_{C22}$ %95,05	77,98	59,33	51,21
	%& & 
	& 99.5
	& 95.1 & 78.0 & 69.3 & 51.2 &&&&&&&&&&\\
     \hline\rule{0pt}{9pt}%
     RDR
	%& & 
	& 97.2
	& 82.6 & 63.2 & 45.7 & 32.0 & 22.0 & 15.0 & 10.1 & 6.8 & 4.6 & 3.1 & 2.0 & 1.4 & 0.9 & 0.6 \\
     CDR
	%& & 
	& 99.5
	& 78.8 & 60.9 & 48.9 & 39.6 & 32.1 & 26.1 & 21.2 & 17.2 & 14.0 & 11.4 & 9.3 & 7.6 & 6.2 & 5.1 \\
      %\hline\rule{0pt}{9pt}%
      \hline
    \end{tabular}
  \end{small}
  \par\smallskip
\caption{Information content $I_C^k$ for calculi  in \%
\label{tab:informationContent}
}
\end{table}

Figure \ref{fig:information-content} shows that 
the average information content for the Point Calculus after $1$ step is $\approx 0.52$
and additionally, the overlap of $\approx 0.33$ is already quite high after a single composition.
Therefore, in order to obtain detailed information it is reasonable to also observe $r_{AC}$ between objects $A$ and $C$ even if $r_{AB}$ and $r_{BC}$ are already known.
%% FD: neglect comment by reviewer 2 "r_{AC} may reveal inconsistencies" as we are dealing with real scenarios, by definition consistent if observation is assumed error free (otherwise he is definitely right) => comment in response
By contrast, the INDU calculus has a very high information content ($\approx 0.87$) and a much smaller overlap. Therefore, it is not so informative to observe $r_{AC}$ as a lot of information is preserved after a composition.
It is clear that the $O_{C}^k$ grows for increasing $k$ as composition results become coarser step by step.
Nevertheless, information loss for PC is much higher than compared to Allen and INDU calculus: $I_{INDU}^5$ and $I_{Allen}^5$ are close to $I_{PC}^2$ ($O_{C}^k$ respectively).

Our results show that there is no evidence for a relation between the information content of a calculus and its classification as per Figure \ref{fig:algebra_notions}. The only exceptions are some of the QTC calculi as $I_C^k$ starts to increase after some $k$ with increasing $k$.
%Yet there is no clear evidence that in general calculi fulfilling different axioms have different characteristics wrt.\ information content; the only exceptions are the QTC$_{Bxx}$ calculi.
%\todo{TS: I think that ``axioms'' is too ambiguous here -- I'd like to  make it clearer that we refer to the classes of algebras established.
%In general, I'd like to strengthen that sentence, how about:
%``Our results show that there is no evidence for a relation between the information content of a calculus and its classification as per Figure \ref{fig:algebra_notions}. The only exceptions are the QTC$_{Bxx}$ calculi.'' (How exactly do they differ?)}

Although the calculi start with quite different values for $I^0$, most calculi have an information content less than $0.1$ after six steps. The most notable exception is the Block Algebra where $I_{BA}^6\approx 0.43$ and even after ten compositions it remains above $16\%$. Only Allen, INDU and CDR are somehow comparable. 
Concerning the classes we derived in Section \ref{sec:properties} no uniform behavior can be observed. 
Thus, from a perspective of expressive power of calculi, there is no argument against working with calculi that are not relation algebras.
We have to note that the comparison of the values for calculi where it is known that a-closure decides consistency and those where it does not (or is unknown) is problematic.
The latter ones may contain relations which are not physically realizable and thus reduce the value of information content.

%Nevertheless, there are some interesting observations wrt.\ OPRA$_i$ and the various QTC variants:
%
%The finer the granularity of OPRA$_m$, i.e., the higher index $m$ is, the more base relations the calculus contains.
%Thus, $I_{OPRA_m}^1$ is higher for greater $m$'s.
%But compositions also seems to lose more information over the steps as $I_{OPRA_2}^4$ is already smaller than $I_{OPRA_1}^4$.
%\cfd{hope to get at least OPRA$_{3/4}$ values for one more step till monday; otherwise we have to delete this claim}

There are some interesting observations wrt.\ the various QTC variants.
%\todo{TS: why ``nevertheless''? With what is this sentence in contrast?}
The QTC$_{B1x}$, QTC$_{B21}$ and QTC$_{C21}$ calculi behave differently from other calculi, whereas QTC$_{B22}$ behaves `normally', i.e., $I_{}^k$ increases, although it is very closely related to the other QTC variants.
Interestingly, QTC$_{B1x}$ and QTC$_{C21}$ are the only calculi where $I_{}^k$ increases with growing $k$. From our perspective, the reason lies in the multimodal structure of the calculus. 
As it combines points with line segments, the composition table (CT) contains empty relations, since an object cannot be interpreted as a point and a line segment at the same time.
Additionally, the CT contains only fairly small relations, i.e., with small $|R|$.
For example, the CT of QTC$_{B12}$ contains $29\%$ empty relations, $29\%$ atomic relations, and $42\%$ other relations which have a maximal size of $|R|\leq 3$.
%The CT of QTC$_{B11}$ contains $44\%$ empty relations, $26\%$ atomic relations, and $30\%$ other relations with $|R|\leq 9$.
%%% XXX \todo{To save space and maintain focus, perhaps one of the two examples suffices?}
%As $I^k_{QTC_{B11}}$ 
%grows faster than 
%$I^k_{QTC_{B12}}$\,,
%we suspect that this behavior is grounded in the empty relations in the CT.
The results for QTC$_{B21}$ are not surprising as the composition table only contains the universal relation and thus, for all $k$, $I^k=0.0$ and $O^k=100.0$.
For QTC$_{C21}$ we observe that $I^k$ decreases to $69.5\%$ at step 4, but starts to increase for $k\geq 5$. We assume that this is also the case for QTC$_{C22}$, as it is a refinement of QTC$_{C21}$, but we were not able to calculate necessary values due to the high complexity. So far, we have no explanation for this decrease.

An additional observation is that PC and QTC$_{B22}$ are similar with respect to information content, i.e., $I_{PC}^k\approx I_{QTC_{B22}}^k$ for $k\leq 14$.
This congruence is interesting as the overlap values vary, the underlying partition scheme is different and the difference in base relations is significant (three for PC vs. 27 for $QTC_{B22}$).
We leave the question of connections between these two calculi for future research.

\section{Conclusion}
We have looked at spatio-temporal representation and reasoning from an algebraic perspective,
examining the implications of algebraic properties on modeling and reasoning algorithms,
and testing these properties for a representative corpus of existing calculi.
The resulting classification shows that calculi which have been described early in the literature
tend to reside in the upper part of Figure \ref{fig:algebra_notions}; that is, they tend to have a rich algebraic structure.
Few more recently developed calculi are based on generalizations and have a weaker structure.
We have been able to conclude that common reasoning procedures are incorrect for the latter class of calculi,
and have proposed a corrected universal a-closure algorithm that makes use of reasoning optimizations where they are allowed.
Furthermore, we found that algebraic properties do not necessarily relate to how much information is preserved in successive reasoning steps.

An interesting and significant line of future work is to extend this study to ternary calculi,
which requires an extension of binary relation algebras to ternary relations, see also \cite{Sci00}.

\paragraph{Acknowledgments:}  We would like to thank Immo Colonius, Arne Kreutzmann, Jae Hee Lee, Andr\'e Scholz and Jasper van de Ven for inspiring discussions during the ``spatial reasoning tea time''.  
This work has been supported by the DFG-funded SFB/TR~8 ``Spatial Cognition'', projects \mbox{R3-[QShape]} and \mbox{R4-[LogoSpace]}.

\linespread{0.965}
\bibliographystyle{plain}
\bibliography{../calculi+algebra}

\clearpage
\appendix
\section{Additional proofs from Section \ref{sec:requirements}}
\label{app:proofs4Req}

\subsection{Proof of Fact \ref{fact:weak+strong_conv+comp_general}}
Given a qualitative calculus $(\Rel,\Int,\breve{},\diamond)$
and relations $R,S \subseteq \Rel$,
the following hold:
\begin{align}
  \varphi(R\breve{~}) & \supseteq \varphi(R)\breve{~}         \label{eq:abstract_converse_general_app}   \\
  \varphi(R \diamond S)  & \supseteq \varphi(R) \circ \varphi(S) \label{eq:abstract_composition_general_app}
\end{align}
% If $C$ has
% \begin{itemize}
%   \item
    If $C$ has
    weak converse, then, for all $R \subseteq \Rel$:
    \begin{equation}
      R\breve{~} = \bigcap\{S \subseteq \Rel \mid \varphi(S) \supseteq \varphi(R)\breve{~}\} 
      \label{eq:weak_converse_general_app}
    \end{equation}
    If $C$ has
%   \item
    strong converse, then, for all $R \subseteq \Rel$:
    \begin{equation}
      \varphi(R\breve{~}) = \varphi(R)\breve{~} 
      \label{eq:strong_converse_general_app}
    \end{equation}
    If $C$ has
%   \item
    weak composition, then, for all $R, S \subseteq \Rel$:
    \begin{equation}
      R\diamond S = \bigcap\{T \subseteq \Rel \mid \varphi(T) \supseteq \varphi(R)\circ\varphi(S)\}
      \label{eq:weak_composition_general_app}
    \end{equation}
    If $C$ has
%   \item
    strong composition, then, for all $R, S \subseteq \Rel$:
    \begin{equation}
      \varphi(R \diamond S) = \varphi(R) \circ \varphi(S)
      \label{eq:strong_composition_general_app}
    \end{equation}
% \end{itemize}

\begin{proof}
  For \eqref{eq:abstract_converse_general_app}, consider
  \begin{xalignat*}{2}
    \varphi(R\breve{~}) & = \bigcup_{r \in R} \varphi(r\breve{~})              & & \text{~definition of~}\varphi(R\breve{~}) \\
                        & \supseteq \bigcup_{r \in R} \varphi(r)\breve{~}      & & \text{~property \eqref{eq:abstract_converse}} \\
                        & = \left(\bigcup_{r \in R} \varphi(r)\right)\breve{~} & & \text{~distributivity in set theory}      \\
                        & = \varphi(R)\breve{~}                                & & \text{~definition of~} \varphi(R).
%   \end{xalignat*}
  \intertext{For \eqref{eq:abstract_composition_general_app}, consider}
%   \begin{xalignat*}{2}
    \varphi(R\diamond S) & = \bigcup_{r\in R}\bigcup_{s\in S} \varphi(r\diamond s)              & & \text{~definition of~}\varphi(R\diamond S)       \\
                         & \supseteq \bigcup_{r\in R}\bigcup_{s\in S} \varphi(r)\circ\varphi(s) & & \text{~property \eqref{eq:abstract_composition}} \\
                         & = \left(\bigcup_{r\in R}\varphi(r)\right) \circ \left(\bigcup_{s\in S}\varphi(s)\right) 
                                                                                                & &  \text{~distributivity in set theory}            \\
                         & = \varphi(R) \circ \varphi(S)                                        & & \text{~definition of~} \varphi(R) \text{~and~} \varphi(S)
  \end{xalignat*}
  Properties \eqref{eq:strong_converse_general_app} and \eqref{eq:strong_composition_general_app}
  are proven using \eqref{eq:strong_converse} and \eqref{eq:strong_composition}
  in the same way as we have just proven \eqref{eq:abstract_converse_general_app} and \eqref{eq:abstract_composition_general_app}
  using \eqref{eq:abstract_converse} and \eqref{eq:abstract_composition}.
  
  For \eqref{eq:weak_converse_general_app},
  let $R = \{r_1, \dots, r_n\}$ for some $n \geqslant 1$ and $r_1, \dots, r_n \in \Rel$.
  Due to Definition \ref{def:stronger_versions_of_comp+conv} \eqref{eq:weak_converse},
  we have that
  \[
    r_i\breve{~} = \bigcap\{S \subseteq \Rel \mid \varphi(S) \supseteq \varphi(r_i)\breve{~}\} 
  \]
  for every $i=1,\dots,n$.
  Let $S_{i1}, \dots, S_{im_i}$ be the $S$ over which the above intersection ranges, i.e.,
  \[
    r_i\breve{~} = \bigcap_{j=1}^{m_i}  S_{ij}.
  \]
  Due to Definition \ref{def:qualitative_calculus},
  we have that
  \[
    R\breve{~} ~~=~~ \bigcup_{i=1}^{n} r_i\breve{~}
               ~~=~~ \bigcup_{i=1}^{n} \bigcap_{j=1}^{m_i}  S_{ij} \\
               ~~=~~ \bigcap_{j_1=1}^{m_1} \dots \bigcap_{j_n=1}^{m_n} \bigcup_{i=1}^{n} S_{ij_i}\,,
  \]
  where the last equality is due to the distributivity of intersection over union.
  Now \eqref{eq:weak_converse_general_app} follows if we show
  that, for every $S \in \Rel$, the following are equivalent.
  \begin{enumerate}
    \item
      $\varphi(S) \supseteq \varphi(R)\breve{~}$
    \item
      there exist $S_1,\dots,S_n$ with $S = S_1 \cup\dots\cup S_n$ and $\varphi(S_i) \supseteq \varphi(r_i)\breve{~}$ for every $i=1,\dots,n$.
  \end{enumerate}
  For ``$1 \Rightarrow 2$'', assume $\varphi(S) \supseteq \varphi(R)\breve{~}$,
  i.e., $\varphi(S) \supseteq \bigcup_{i=1}^{n}\varphi(r_i)\breve{~}$ (Definition \ref{def:qualitative_calculus}).
  If we further assume that $S = \{s_1, \dots, s_\ell\}$, which implies that 
  $\varphi(S) \supseteq \bigcup_{j=1}^{\ell}\varphi(s_j)$ (Definition \ref{def:qualitative_calculus}),
  then we can choose $S_i = \{s_j \mid \varphi(s_j) \cup \varphi(r_i)\breve{~} = \emptyset\}$
  for every $i=1,\dots,n$. Because $C$ is based on JEPD relations,
  we have that $\varphi(S_i) \supseteq \varphi(r_i)\breve{~}$.
  
  For ``$2\Rightarrow1$'', let $S = S_1 \cup\dots\cup S_n$ and $\varphi(S_i) \supseteq \varphi(r_i)\breve{~}$ for every $i=1,\dots,n$.
  Due to Definition \ref{def:qualitative_calculus} and because $C$ is based on JEPD relations,
  we have that $\varphi(S) = \bigcup_{i=1}^{n}\varphi(S_i)$.
  Hence, $\varphi(S) \supseteq \bigcup_{i=1}^{n} \varphi(r_i)\breve{~}$ due to the assumption,
  and $\varphi(S) \supseteq \varphi(R)\breve{~}$ due to Definition \ref{def:qualitative_calculus}.

  \eqref{eq:weak_composition_general_app} is proven analogously.
  \qed
\end{proof}

\subsection[RA5 and RA5l from Table \ref{tab:relation_algebra_axioms} are equivalent given RA1, RA4 and RA6--RA9]{\RA5 and \RA{5l} from Table \ref{tab:relation_algebra_axioms} are equivalent given \RA1, \RA4 and \RA6--\RA9}
We only show that \RA5 implies \RA{5l}; the converse direction is analogous.
\begin{xalignat*}{2}
  r \diamond (s \cup t) & = (r\breve{~})\breve{~} \diamond ((s \cup t)\breve{~})\breve{~}                     & & (\RA7) \\
                        & = ((s \cup t)\breve{~} \diamond r\breve{~})\breve{~}                                & & (\RA9) \\
                        & = ((t\breve{~} \cup s\breve{~}) \diamond r\breve{~})\breve{~}                       & & (\RA8) \\
                        & = ((t\breve{~} \diamond r\breve{~}) \cup (s\breve{~} \diamond r\breve{~}))\breve{~} & & (\RA5) \\
                        & = ((r \diamond t)\breve{~} \cup (r \diamond s)\breve{~})\breve{~}                   & & (\RA9) \\
                        & = ((r \diamond t)\breve{~})\breve{~} \cup ((r \diamond s)\breve{~})\breve{~}        & & (\RA8) \\
                        & = (r \diamond t) \cup (r \diamond s)                                                & & (\RA7) \\
                        & = (r \diamond s) \cup (r \diamond t)                                                & & (\RA1)
\end{xalignat*}
\qed

\subsection[RA6 and RA6l from Table \ref{tab:relation_algebra_axioms} are equivalent given RA7 and RA9]{\RA6 and \RA{6l} from Table \ref{tab:relation_algebra_axioms} are equivalent given \RA7 and \RA9}
We only show that \RA6 implies \RA{6l}; the converse direction is analogous.
We first establish that $\id\breve{~} = \id$.
\begin{xalignat*}{2}
  \id\breve{~} & = \id\breve{~} \diamond \id                     & & (\RA6) \\
               & = \id\breve{~} \diamond (\id\breve{~})\breve{~} & & (\RA7) \\
               & = (\id\breve{~} \diamond \id)\breve{~}          & & (\RA9) \\
               & = (\id\breve{~})\breve{~}                       & & (\RA6) \\
               & = \id                                           & & (\RA7)
\intertext{Now we use this lemma to establish \RA{6l}.}
  \id \diamond r  & = (\id\breve{~})\breve{~} \diamond (r\breve{~})\breve{~} & & (\RA7)  \\
                  & = (r\breve{~} \diamond \id\breve{~})\breve{~}            & & (\RA9)  \\
                  & = (r\breve{~} \diamond \id)\breve{~}                     & & (\text{Lemma}) \\
                  & = (r\breve{~})\breve{~}                                  & & (\RA6)  \\
                  & = r                                                      & & (\RA7)
\end{xalignat*}
\qed

\section{Additional proofs from Section \ref{sec:relation_algebras}} 
\label{app:proofs4RA}

\subsection{Proof of Fact \ref{fact:minimal_axiom_set_implied_by_calculus_def}}
Axioms \RA1--\RA3 are always satisfied because they are a characterization of a Boolean algebra;
and the set operations on the relations form a Boolean algebra because
$\varphi$ maps base relations to a set of JEPD relations and complex relations
to sets of interpretations of base relations.

The definition of the converse and composition operations for non-base relations
in Definition \ref{def:qualitative_calculus} ensures that Axioms \RA5 and \RA8 hold.

Axiom \RA[sup]7 always holds due to JEPD and weak converse:
For every $r \in \Rel$, we have that
\[
  \varphi(r\breve{~}\breve{~}) \supseteq \varphi(r\breve{~})\breve{~} \supseteq \varphi(r)\breve{~}\breve{~} = \varphi(r),
\]
where the first inclusion is due to Fact \ref{fact:weak+strong_conv+comp_general} \eqref{eq:abstract_converse_general}
with $R = r\breve{~}$,
the second inclusion is due to Definition \ref{def:qualitative_calculus} \eqref{eq:abstract_converse} for $r$,
and the equation is due to the properties of binary relations over the universe $\Univ$.
Since the $\varphi(r)$ are a set of JEPD relations, $r\breve{~}\breve{~} \supseteq r$ follows.
This reasoning carries over to arbitrary relations.

Axioms \WA[sup] and \SA[sup] always hold due to JEPD and weak composition:
For every $r \in \Rel$, we have that
\[
  \varphi((r \diamond 1) \diamond 1) \supseteq \varphi(r \diamond 1) \circ \varphi(1)
                                     =         \varphi(r \diamond 1) \circ (\Univ \times \Univ)
                                     \supseteq \varphi(r \diamond 1),
\]
where the first inclusion is due to to Fact \ref{fact:weak+strong_conv+comp_general} \eqref{eq:abstract_composition_general}
with $R = r \diamond 1$ and $S = 1$,
and the last inclusion is due to the fact that $R \circ (\Univ \times \Univ) \supseteq R$ for any binary relation $R \subseteq \Univ \times \Univ$.
Since the $\varphi(r)$ are a set of JEPD relations, $(r \diamond 1) \diamond 1 \supseteq r \diamond 1$  follows.
Again, this reasoning carries over to arbitrary relations.

Axioms \RA[sub]6, \RA[sub]{6l}, \RA[sub]7 %, \RA[sub]{10} 
are violated by the following calculus.
Let $\Rel = \{r_1,r_2\}$, $\Univ = \{0,1\}$, $\id = r_1$, $1 = \{r_1,r_2\}$ with:
\begin{xalignat*}{3}
  \varphi(r_1)  & = \{(0,0),(0,1)\} & r_1\breve{~}  & = 1 & r_1 \diamond r_1 & = 1   \\
  \varphi(r_2)  & = \{(1,0),(1,1)\} & r_2\breve{~}  & = 1 & r_1 \diamond r_2 & = r_1 \\
                &                   &               &     & r_2 \diamond r_1 & = 1   \\
                &                   &               &     & r_2 \diamond r_2 & = r_2
\end{xalignat*}
This calculus satisfies the conditions from Definition \ref{def:qualitative_calculus}
but violates Axioms \RA[sub]6, \RA[sub]{6l}, \RA[sub]7: %, \RA[sub]{10}:
\begin{xalignat*}{2}
%   r_1 \diamond \id = 1         & ~\nsubseteq~ r_1 & r_1\breve{~} \diamond \overline{r_1\diamond r_2} = 1 \diamond r_2 = 1 & ~\nsubseteq~ r_1 = \overline{r_2} \\
%   \id \diamond r_1 = 1         & ~\nsubseteq~ r_1 &                                                              &                                   \\
%   r_1\breve{~}\breve{~} = 1    & ~\nsubseteq~ r_1 &                                                              &  
  & \RA[sub]6    & r_1 \diamond \id = 1         & ~\nsubseteq~ r_1 \\
  & \RA[sub]{6l} & \id \diamond r_1 = 1         & ~\nsubseteq~ r_1 \\
  & \RA[sub]7    & r_1\breve{~}\breve{~} = 1    & ~\nsubseteq~ r_1
\end{xalignat*}

Axioms \WA[sub], \SA[sub], \RA[sub]4, \RA[sup]4, \RA[sup]6, \RA[sup]{6l}, \RA[sub]9, \RA[sup]9, \RA[sub]{10}, \RA[sup]{10}, \PL[right], \PL[left] are violated by the following calculus.
Let $\Rel = \{r_1,r_2,r_3,r_4\}$, $\Univ = \{0,1\}$, $\id = r_1$, $1 = \{r_1,r_2\}$ with:
\par\noindent
\parbox{.45\textwidth}{%
  \begin{xalignat*}{3}
    \varphi(r_1)  & = \{(0,0)\} & r_1\breve{~}  & = r_1 \\
    \varphi(r_2)  & = \{(1,1)\} & r_2\breve{~}  & = r_2 \\
    \varphi(r_3)  & = \{(0,1)\} & r_3\breve{~}  & = r_4 \\
    \varphi(r_4)  & = \{(1,0)\} & r_4\breve{~}  & = r_3
  \end{xalignat*}
}%
\hfill
\parbox{.55\textwidth}{%
  \begin{align*}
%     \rowcolors{1}{lightblue}{}%
    \begin{array}{l|llll}
%       \hline\rowcolor{medblue}%
      \quad \text{right operand} & r_1       & r_2       & r_3       & r_4       \\
      \text{left operand}\quad \diamond &    &           &           &           \\
      \hline
      \qquad r_1                 & r_1       & \emptyset & r_3       & \emptyset \\
      \qquad r_2                 & \emptyset & r_3       & \emptyset & r_4       \\
      \qquad r_3                 & \emptyset & r_3       & \emptyset & r_1,r_4   \\
      \qquad r_4                 & r_1,r_4   & \emptyset & r_2       & \emptyset
    \end{array}
  \end{align*}
}
\par\noindent
This calculus satisfies the conditions from Definition \ref{def:qualitative_calculus}
but violates Axioms \WA[sub], \SA[sub], \RA[sub]4, \RA[sup]4, \RA[sup]6, \RA[sup]{6l}, \RA[sub]9, \RA[sup]9, \RA[sub]{10}, \RA[sup]{10}, \PL[right], \PL[left]:
\begin{small}
\begin{align*}
  & \WA[sub], \SA[sub]:~~         (r_1 \diamond 1) \diamond 1 = 1                                                                     ~\nsubseteq~    \{r_1,r_3,r_4\} = r_1 \diamond 1                                                 \\
  & \RA[sub]4:~~                  (r_1 \diamond r_3) \diamond r_4 = r_3 \diamond r_4 = \{r_1,r_4\}                                    ~\nsubseteq~    r_1 = r_1 \diamond \{r_1,r_4\} = r_1 \diamond (r_3 \diamond r_4)                 \\
  & \RA[sup]4:~~                  (r_4 \diamond r_3) \diamond r_4 = r_2 \diamond r_4 = r_4                                            ~\nsupseteq~    \{r_1,r_4\} = r_4 \diamond \{r_1,r_4\} = r_4 \diamond (r_3 \diamond r_4)         \\
  & \RA[sup]6:~~                  r_2 \diamond \id = r_2 \diamond r_1 = \emptyset                                                     ~\nsupseteq~    r_2                                                                              \\
  & \RA[sup]{6l}:~~               \id \diamond r_2 = r_1 \diamond r_2 = \emptyset                                                     ~\nsupseteq~    r_2                                                                              \\
  & \RA[sub]9, \RA[sup]9:~~       (r_3 \diamond r_4)\breve{~} = \{r_1,r_4\}\breve{~} = \{r_1,r_3\}                                    ~\nsubeqsubset~ \{r_1,r_4\} = r_3 \diamond r_4 = r_4\breve{~} \diamond r_3\breve{~}              \\
  & \RA[sub]{10}, \RA[sup]{10}:~~ r_3\breve{~} \diamond \overline{r_3\diamond r_1} = r_4 \diamond \emptyset = r_4 \diamond 1 = \{r_1,r_2,r_4\} ~\nsubeqsubset~ \{r_2,r_3,r_4\} = \overline{r_1}                                        \\
  & \PL[right]:~~                 (r_1 \diamond r_4) \cap r_1\breve{~} = \emptyset \cap r_1 = \emptyset                               ~~\text{but}~~  (r_4 \diamond r_1) \cap r_1\breve{~} = \{r_4,r_1\} \cap r_1 = r_4 \neq \emptyset \\
  & \PL[left]:~~                  (r_4 \diamond r_1) \cap r_1\breve{~} = \{r_4,r_1\} \cap r_1 = r_1 \neq \emptyset                    ~~\text{but}~~  (r_1 \diamond r_1) \cap r_4\breve{~} = r_1 \cap r_3 = \emptyset
\end{align*}
\end{small}
\qed

\begin{remark}
  Of course, there are calculi that satisfy only the weak conditions from Definition \ref{def:qualitative_calculus}
  but are a relation algebra, for example the following.
  Let $\Rel = \{r_0,r_1\}$, $\Univ = \{0,1\}$, $\id = r_1$, $1 = \{r_1,r_2\}$ with:
  \begin{xalignat*}{3}
    \varphi(r_1)  & = \{(0,0),(0,1)\} & r_1\breve{~}  & = r_2 & r_1 \diamond r_1 & = r_1 \\
    \varphi(r_2)  & = \{(1,0),(1,1)\} & r_2\breve{~}  & = r_1 & r_1 \diamond r_2 & = 1   \\
                  &                   &               &       & r_2 \diamond r_1 & = 1   \\
                  &                   &               &       & r_2 \diamond r_2 & = r_2
  \end{xalignat*}
\end{remark}

\section{Detailed description of the test results in Section \ref{sec:properties} by calculus}
\label{app:descrOfProperties}

\begin{description}
  \item[9-Intersection]
    ~\par
    \begin{itemize}
      \item
        \Hets: all axioms are satisfied.
      \item
        \SparQ: all 6 tests are passed.
    \end{itemize}
  
%   \item[Absolute Distance Calculus]
%     ~\par
%     \begin{itemize}
%       \item
%         \cite{Mos07}: the relation \textsf{close} is claimed to be an identity, but it is not.
%         After introducing a new identity \textsf{same} and updating the composition table, the calculus becomes a relation algebra.
%       \item
%         \Hets: Axiom \RA6 is violated by both base relations.
%         After carrying out the same repair, all axioms are satisfied.
%       \item
%         \SparQ: missing identity is reported. After the repair, all tests are passed.
%     \end{itemize}
% 
  \item[Allen's interval relations]
    ~\par
    \begin{itemize}
      \item
        \cite{Mos07}: this calculus is a relation algebra.
      \item
        \Hets: all axioms are satisfied.
      \item 
        \SparQ: all 6 tests are passed.
    \end{itemize}

  \item[Block Algebra]
    ~\par
    \begin{itemize}
      \item
        \Hets: all axioms are satisfied. % TS: This took >60 hours in Hets on a01 !
      \item 
        \SparQ: all 6 tests are passed.
    \end{itemize}

  \item[Cardinal Direction \emph{Calculus}]
    ~\par
    \begin{itemize}
      \item
        \cite{Mos07}: this calculus is a relation algebra, as already reported in \cite{ligozat-JVLC:98}.
      \item
        \Hets: all axioms are satisfied.
      \item 
        \SparQ: all 6 tests are passed.
    \end{itemize}

  \item[Cardinal Direction \emph{Relations}]
    ~\par
    \begin{itemize}
      \item
        \Hets: 
          \RA6 is violated for all base relations but one,
          \RA{6l} for only 209 base relations,
          \RA7 for 214 base relations,
          \RA9 for 5,607 pairs, 
          \RA{10} for 41,834 pairs,
          \PL for 22,976 triples,
          \RA4 for 2,936,946 triples,
          and
          \SA for 38 base relations.
          \WA is satisfied.
          CDC is therefore just a Boolean algebra with distributivity, weak associativity and weak involution.
      \item 
        \SparQ: the version with 218 base relations fails all 6 tests:
        \RA6 and \RA{6l} for 217 base relations, \RA7 for 214 base relations,
        \RA9 for 45,939 pairs (97\%),\footnote{This number is too large because SparQ uses a variant of \RA9.}
        \PL for 22,976 triples ($<$\,1\%),
        and \RA4 for 2,936,946 triples (28\%).
    \end{itemize}

    \pagebreak[2]
    Counterexamples:
    \par
    \parbox{\linewidth}{%
      \begin{footnotesize}
        First 5 counterexamples for \PL
%         \[
%           \begin{array}{@{}r@{~}c@{~}c@{~}l@{}}
%             \nonpeirceext{\Reln{W-NW-N-NE-E}}{\Reln{NW-N-NE}}{\Reln{B-S}}{}{\Reln{B}}   \\[4pt]
%             \nonpeirceext{\Reln{W-NW-N-NE-E}}{\Reln{NW-N-NE}}{\Reln{B}}{}{\Reln{B}}     \\[4pt]
%             \nonpeirceext{\Reln{W-NW-N-NE-E}}{\Reln{NW-N}}{\Reln{S-E-SE}}{}{\Reln{B}}   \\[4pt]
%             \nonpeirceext{\Reln{W-NW-N-NE-E}}{\Reln{NW-N}}{\Reln{B-S-SE}}{}{\Reln{B}}   \\[4pt]
%             \nonpeirceext{\Reln{W-NW-N-NE-E}}{\Reln{NW-N}}{\Reln{B-S-E-SE}}{}{\Reln{B}}
%           \end{array}
%         \]
        \begin{align*}
          \nonpeirceext{\Reln{W-NW-N-NE-E}}{\Reln{NW-N-NE}}{\Reln{B-S}}{}{\Reln{B}}   \displaybreak[0]\\[4pt]
          \nonpeirceext{\Reln{W-NW-N-NE-E}}{\Reln{NW-N-NE}}{\Reln{B}}{}{\Reln{B}}     \displaybreak[0]\\[4pt]
          \nonpeirceext{\Reln{W-NW-N-NE-E}}{\Reln{NW-N}}{\Reln{S-E-SE}}{}{\Reln{B}}   \displaybreak[0]\\[4pt]
          \nonpeirceext{\Reln{W-NW-N-NE-E}}{\Reln{NW-N}}{\Reln{B-S-SE}}{}{\Reln{B}}   \displaybreak[0]\\[4pt]
          \nonpeirceext{\Reln{W-NW-N-NE-E}}{\Reln{NW-N}}{\Reln{B-S-E-SE}}{}{\Reln{B}}
        \end{align*}
        \par
      \end{footnotesize}%
    }%
    \par
    \parbox{.5\linewidth}{%
      \begin{footnotesize}
        First 10 counterexamples for \RA9
        \begin{align*}
          \noninvdist{\Reln{S}}{\Reln{S}} \\
          \noninvdist{\Reln{S}}{\Reln{SW}} \\
          \noninvdist{\Reln{S}}{\Reln{W}} \\
          \noninvdist{\Reln{S}}{\Reln{NW}} \\
          \noninvdist{\Reln{S}}{\Reln{N}} \\
          \noninvdist{\Reln{S}}{\Reln{NE}} \\
          \noninvdist{\Reln{S}}{\Reln{E}} \\
          \noninvdist{\Reln{S}}{\Reln{SE}} \\
          \noninvdist{\Reln{S}}{\Reln{B}} \\
          \noninvdist{\Reln{S}}{\Reln{S-SW}}
        \end{align*}
        \par
      \end{footnotesize}%
    }%
    \parbox{.5\linewidth}{%
      \begin{footnotesize}
        First 10 counterexamples for \RA{10}
        \begin{align*}
          \nontarskidm{\Reln{S}}{\Reln{S}} \\
          \nontarskidm{\Reln{S}}{\Reln{SW}} \\
          \nontarskidm{\Reln{S}}{\Reln{W}} \\
          \nontarskidm{\Reln{S}}{\Reln{NW}} \\
          \nontarskidm{\Reln{S}}{\Reln{N}} \\
          \nontarskidm{\Reln{S}}{\Reln{NE}} \\
          \nontarskidm{\Reln{S}}{\Reln{E}} \\
          \nontarskidm{\Reln{S}}{\Reln{SE}} \\
          \nontarskidm{\Reln{S}}{\Reln{B}} \\
          \nontarskidm{\Reln{S}}{\Reln{S-SW}}
        \end{align*}
        \par
      \end{footnotesize}%
    }%
    \par
    \parbox{.6\linewidth}{%
      \begin{footnotesize}
        First 10 counterexamples for \SA
        \begin{align*}
          \nonsemiassoc{\Reln{B-S-W-NW}} \\
          \nonsemiassoc{\Reln{B-W-N-NE}} \\
          \nonsemiassoc{\Reln{B-N-E-SE}} \\
          \nonsemiassoc{\Reln{B-S-SW-E}} \\
          \nonsemiassoc{\Reln{B-S-NE-E}} \\
          \nonsemiassoc{\Reln{B-S-W-SE}} \\
          \nonsemiassoc{\Reln{B-SW-W-N}} \\
          \nonsemiassoc{\Reln{B-NW-N-E}} \\
          \nonsemiassoc{\Reln{B-S-W-NW-SE}} \\
          \nonsemiassoc{\Reln{B-SW-W-N-NE}}
        \end{align*}
        \par
      \end{footnotesize}%
    }%
    \par
%     \parbox{.5\linewidth}{%
      \begin{footnotesize}
        First 5 counterexamples for \RA4
        \begin{align*}
          \Nonassoc{\Reln{W-NW-N-NE-E-SE}}{\Reln{W-NW-N-NE-E-SE}}{\Reln{W-NW-N-NE-E}} \displaybreak[0]\\[4pt]
          \Nonassoc{\Reln{W-NW-N-NE-E-SE}}{\Reln{W-NW-N-NE-E-SE}}{\Reln{W-NW-N-NE}}   \displaybreak[0]\\[4pt]
          \Nonassoc{\Reln{W-NW-N-NE-E-SE}}{\Reln{W-NW-N-NE-E-SE}}{\Reln{W-NW-N}}      \displaybreak[0]\\[4pt]
          \Nonassoc{\Reln{W-NW-N-NE-E-SE}}{\Reln{W-NW-N-NE-E-SE}}{\Reln{S-SW-W-SE}}   \displaybreak[0]\\[4pt]
          \Nonassoc{\Reln{W-NW-N-NE-E-SE}}{\Reln{W-NW-N-NE-E-SE}}{\Reln{S-SW-W-E-SE}}
        \end{align*}
        \par
      \end{footnotesize}%
%     }%

  \item[CYC${}_\text{b}$]
    ~\par
    \begin{itemize}
      \item
        \Hets: all axioms are satisfied.
      \item
        \SparQ: all 6 tests are passed.
    \end{itemize}

  \item[Dependency Calculus]
    ~\par
    \begin{itemize}
      \item
        \cite{Mos07}: this calculus is a relation algebra (and homomorphically embeddable into RCC-5).
      \item 
        \Hets: all axioms are satisfied.
      \item 
        \SparQ: All 5 tests are passed.
    \end{itemize}

  \item[Dipole Calculus]
    ~\par
    \begin{itemize}
      \item
        \Hets:
        and DRA${}_{\text{f}}$ violates \RA4 for 71,424 triples of base relations.
        For example, we have that
        \[
            \nonassoc[]{\Reln{rrrl}}{\Reln{rrrl}}{\Reln{llrl}}.
        \]
        However, DRA${}_{\text{f}}$ satisfies the weaker \axiom{WA} and \axiom{SA}.
        The violation of associativity attributed in \cite{MoratzEtAl2011} to the converse operation being weak,
        illustrated by the example
        $\Reln{bfii} \diamond \Reln{lllb} = \Reln{llll}$ for DRA${}_{\text{f}}$.
        DRA${}_{\text{fp}}$ and DRA-connectivity satisfy all axioms.
      \item
        SparQ: \RA4 fails for 71,424 triples in DRA${}_{\text{f}}$ (about 19\%). The other 4 tests are passed.
        DRA${}_{\text{fp}}$ and DRA-connectivity pass all 6 tests.
    \end{itemize}

  \item[Geometric Orientation]
    ~\par
    \begin{itemize}
      \item
        \Hets: all axioms are satisfied.
      \item
        \SparQ: all 6 tests are passed.
    \end{itemize}

  \item[INDU]
    ~\par
    \begin{itemize}
      \item 
        \Hets: 1,880 triples violate \RA4; 
        the violation of associativity has already been reported and attributed to the absence of strong composition
        in \cite{BCL06}: for example,
        \[
            \nonassoc[]{\Reln{bi}^>}{\Reln{mi}^>}{\Reln{m}^>}.
        \]
        All other axioms are satisfied,
        including \axiom{SA} and \axiom{WA}. Therefore, the Interval-Duration Calculus
        is a semi-associative relation algebra.
      \item 
        \SparQ: All tests except \RA4 are passed; \RA4 fails on 1880 triples (12 \%).
    \end{itemize}

  \item[{\boldmath $\text{OPRA}_n$}]
    ~\par
    \begin{itemize}
      \item 
        \cite{Mos07}: for $n=1,2,3,4$, either the calculus is not associative, or there is an error in the composition table.
      \item 
        \Hets: \RA4 is violated by 
        \begin{tabular}[t]{rl}
                  1,664 & triples for OPRA${}_1$, \\
                257,024 & triples for OPRA${}_2$, \\
              2,963,952 & triples for OPRA${}_3$, \\
             16,711,680 & triples for OPRA${}_4$, \\
             63,840,000 & triples for OPRA${}_5$, \\
            190,771,200 & triples for OPRA${}_6$, \\
            481,275,648 & triples for OPRA${}_7$, \\
          1,072,693,248 & triples for OPRA${}_8$.
        \end{tabular}
        
        All other axioms are satisfied, including \axiom{SA} and \axiom{WA}. 
        Therefore, $\text{OPRA}_n$, $n \leqslant 8$,
        are semi-associative relation algebras.
        % Times needed BEFORE CHRISTIAN'S OPTIMISATION RE. TICKET #1084:
        % * OPRA-3 with Hets on a01:
        %   - Few seconds for R6, R6l, R7 (single elements)
        %   - Less than 5min for R9 (pairs)
        %   - 10h for PL (triples)
        %   - 25.5h for R4 (triples) -- with stack set to 128MB (+RTS -K128M -RTS)
        %   - Less than 30min (?) for SA (single elements)
        %
        % * OPRA-4 with Hets on a01, pollux: 
        %   - Few seconds on a01 for R6, R6l, R7 (single elements)
        %   - 43min on a01 for R9 (pairs)
        %   - 10d3.5h on pollux for PL (triples)
        %   - Bus error after 15d on pollux for R4 (triples) -- with stack set to 512MB (+RTS -K512M -RTS)
        %   - 10min on a01 for SA (single elements)
        %
        % Times needed AFTER THE OPTIMISATION (SPEEDUP):
        % * OPRA-4 with Hets on pollux: R4 in <8min
        % * OPRA-5 with Hets on pollux: R5 in 1h47min (including 25min of memory swaps)
        %
        % Times needed AFTER THE OPTIMISATION (MEMORY CONSUMPTION):
        % * OPRA-6 with Hets on pollux: R4 in 52min
        % * OPRA-7 with Hets on pollux: R4 in 2h59min
        % * OPRA-8 with Hets on pollux: R4 in 9h10min
      \item 
        \SparQ: All tests except \RA4 are passed.
        In OPRA${}_1$, 1664 triples (21\%) do not agree with \RA4;
        in OPRA${}_2$, there are 257,024 violations (69\%);
        in OPRA${}_3$   2,963,952 (78\%);
        in OPRA${}_4$  16,711,680 (83\%);
        in OPRA${}_5$  63,840,000 (86\%);
        and
        in OPRA${}_6$ 190,771,200 (88\%).
        % Times needed for OPRA-n with SparQ after Diedrich's optimization on 6.11.2012:
        % * negligible for OPRA-1..3
        % * 22min for OPRA-4
        % * ~2.5h for OPRA-5 (plus ~30min for building relations)
        % * ~6.5h for OPRA-6 (plus ~90min for building relations)
    \end{itemize}    

    \parbox{\linewidth}{%
      \begin{footnotesize}
        First 10 counterexamples
        for \RA4, $\text{OPRA}_1$:
        \begin{align*}
          \nonassoc{3_{3}}{3_{2}}{x},\qquad x = 0_3,\dots,0_0 \\
          \nonassoc{3_{3}}{3_{0}}{x},\qquad x = 2_3,\ldots,2_0 \\
          \nonassoc{3_{3}}{2_{3}}{x},\qquad x = 3_2,3_0
        \end{align*}
        \par
      \end{footnotesize}%
    }
    \parbox{\linewidth}{%
      \begin{footnotesize}
        First 10 counterexamples
        for \RA4, $\text{OPRA}_2$:
        \[
          \nonassoc[]{7_{7}}{7_{7}}{x},\qquad x = 6_7,\dots,6_0,5_7,5_6
        \]
        \par
      \end{footnotesize}%
    }
    \parbox{\linewidth}{%
      \begin{footnotesize}
        First 10 counterexamples
        for \RA4, $\text{OPRA}_3$:
        \begin{align*}
          \nonassoc{11_{11}}{11_{11}}{11_{i}},\qquad i=10,8,6,4,2,0 \\
          \nonassoc{11_{11}}{11_{11}}{10_{i}},\qquad i=11,\dots,8
        \end{align*}
        \par
      \end{footnotesize}%
    }
    \parbox{\linewidth}{%
      \begin{footnotesize}
        First 10 counterexamples
        for \RA4, $\text{OPRA}_n$, $n=4,\dots,8$, with $m=4n-1$:
        \[
          \nonassoc[]{m_{m}}{m_{m}}{m_{i}},\qquad i = m,\dots,m-9
        \]
        \par
      \end{footnotesize}%
    }%

  \item[Point Calculus]
    ~\par
    \begin{itemize}
      \item
        \Hets: all axioms are satisfied.
      \item 
        \SparQ: all 6 tests are passed.
    \end{itemize}

  \item[QTC]
    ~\par
    \begin{itemize}
      \item
        \cite{Mos07}:
        \id is not a neutral element of composition in QTC$_{\text{B11}}$ and QTC$_{\text{B12}}$.
        After introducing a new \id relation and making the relations JEPD, these two calculi pass all tests.
      \item
        \Hets: QTC$_{\text{B11}}$ and QTC$_{\text{B12}}$ violate \RA6 and \RA{6l}
        for all base relations that are not \id.        
%         % The following was before Frank's repair.
%         Variant b12 additionally violates \RA4 for 54 triples,
%         \RA9 and \RA{10} for 4 pairs each, and \PL for 8 triples:\footnote{%
%           \label{ftn:special_treatment_R10}%
%           Note that, for calculi that violate \RA9,
%           the equivalence between \PL and \RA{10} is no longer ensured,
%           hence the mentioning of both of them.
%           Furthermore, \RA{10} is the only axiom that should be tested for all relations,
%           but we have only tested it for all base relations.
%           Therefore, there could be more violations than the four listed.
%           The same cautions apply to variant c21.%
%         }
%         \par
        QTC$_{\text{B21}}$ (9 base relations) and QTC$_{\text{B22}}$ (27 base relations)
        violate \RA6 and \RA{6l} for \emph{all} base relations.
%         % The following was before Frank's repair.
%         In addition, b22 does not satisfy \PL for 2916 triples.
        \par
        QTC$_{\text{C21}}$ (81 base relations) does not satisfy \RA6 and \RA{6l} for all base relations but one,
        \RA4 for 292,424 triples, \RA9 for 160 pairs, \RA{10} for 80 pairs, and \PL for 1056 triples.\footnote{%
          \label{ftn:special_treatment_R10}%
          Note that, for calculi that violate \RA9,
          the equivalence between \PL and \RA{10} is no longer ensured,
          hence the mentioning of both of them.
          Furthermore, \RA{10} is the only axiom that should be tested for all relations,
          but we have only tested it for all base relations.
          Therefore, there could be more violations than the four listed.
          The same cautions apply to QTC$_{\text{C22}}$.%
        }
        QTC$_{\text{C22}}$ (209 base relations) does not satisfy \RA6 and \RA{6l} for 208 relations,
        \RA9 for 1248 pairs, \RA{10} for 624 pairs, \PL for 12,768 triples, and \RA4 for 7,201,800 triples,
        see also footnote \ref{ftn:special_treatment_R10}.
        % Times needed for QTC-c22 with Hets on a01, denebola: 
        % * Few seconds on a01 for R6, R6l, R7 (single elements)
        % * 10min on a01 for SA (single elements)
        % * 24min on a01 for R9 (pairs)
        % * 20min on a01 for R10 (pairs)
        % * 104h on a01 for PL (triples)
        % * 121h on denebola for R4 (triples) -- with stack set to 512MB (+RTS -K512M -RTS)
      \item
        \SparQ: QTC$_{\text{B11}}$ and QTC$_{\text{B12}}$
        are reported to have no identity relation,
        and they both fail the test for \RA6 and \RA{6l} with all base relations but one.
%         % The following was before Frank's repair.
%         In addition, b12 fails the tests for \RA9 (4 pairs, 1\%), \PL (8 triples, $<$1\%), and \RA4 (54 triples, 1\%).
        \par
        QTC$_{\text{B21}}$ and QTC$_{\text{B22}}$
        are also reported to have no identity relation
        and fail the test for \RA6 and \RA{6l} with \emph{all} base relations.
%         % The following was before Frank's repair.
%         In addition, b22 fails the test for \PL (2916 triples, 15\%).
        \par
        QTC$_{\text{C21}}$ (81 base relations) fails the test for 
        \RA4 (292,424 triples, 55\%),
        \RA6 and \RA{6l} (80 out of 81 relations),
        \RA9 (160 pairs, 2\%), and
        \PL (1056 triples, $<$1\%).
        \par
        QTC$_{\text{C22}}$ (209 base relations) shows a similar result: it fails the test for
        \RA4 (7,201,800 triples, 79\%),
        \RA6 and \RA{6l} (208 out of 209 relations),
        \RA9 (1248 pairs, 3\%), and
        \PL (12,768 triples, $<$1\%).
        % TS: The last result can only be obtained by enlarging heap size via the option "--dynamic-space-size 4000" in line 22 of SparQ's Makefile.
%         \par\smallskip
%         \todo{TS: I still don't believe the observation that the c-variants violate \RA9, \RA{10} and \PL with so relatively few counterexamples, and still suspect an error. Frank is looking into it.}
%         % The following was before Frank's repair.
%         \todo{TS: SparQ chokes on c22 (heap exhaustion for \RA4; 240 counterexamples for \RA6 and \RA{6l},
%         1440 counterexamples for \RA9, 13,728 counterexamples for \PL).}
    \end{itemize}

  \item[RCC]
    ~\par
    \begin{itemize}
      \item
        \cite{Mos07}: after certain repairs to the composition table, RCC-5 and RCC-8 are found to be relation algebras.
      \item
        \Hets: all axioms are satisfied.
      \item
        \SparQ: after one repair (removing \textsf{EC} from the entry $\textsf{NTPPi} \diamond \textsf{NTPP}$), RCC-5 and RCC-8 pass all 6 tests.
    \end{itemize}

  \item[Rectangular Direction Relations]
    ~\par
    \begin{itemize}
      \item
        \Hets: 
          \RA6 is violated for all base relations but one,
          \RA{6l} for only 33 base relations,
          \RA7 for 32 base relations,
          \RA9 for 855 pairs, 
          \RA{10} for 671 pairs,
          and 
          \PL for 3424 triples.
          \RA4, \SA, and \WA are satisfied.
          CDC is therefore just a Boolean algebra with distributivity, associativity and weak involution.
      \item 
        \SparQ: the calculus fails the following tests:
        \RA6 and \RA{6l} for 35 out of 36 base relations, \RA7 for 32 base relations,
        \RA9 for 855 pairs (66\%),
        and
        \PL for 3424 triples (7\%).
    \end{itemize}

    \pagebreak[4]
    Counterexamples:
    \par
    \parbox{\linewidth}{%
      \begin{footnotesize}
        First 10 counterexamples for \PL
        \begin{align*}
          \nonpeirce{\Reln{B}}{\Reln{N}}{\Reln{B:W}} \\
          \nonpeirce{\Reln{B}}{\Reln{N}}{\Reln{B}} \\
          \nonpeirce{\Reln{B}}{\Reln{S}}{\Reln{B:W:NW:N:NE:E}} \\
          \nonpeirce{\Reln{B}}{\Reln{S}}{\Reln{B:N:NE:E}} \\
          \nonpeirce{\Reln{B}}{\Reln{S}}{\Reln{B:W:NW:N}} \\
          \nonpeirce{\Reln{B}}{\Reln{S}}{\Reln{B:W:E}} \\
          \nonpeirce{\Reln{B}}{\Reln{S}}{\Reln{B:N}} \\
          \nonpeirce{\Reln{B}}{\Reln{S}}{\Reln{B:E}} \\
          \nonpeirce{\Reln{B}}{\Reln{S}}{\Reln{B:W}} \\
          \nonpeirce{\Reln{B}}{\Reln{S}}{\Reln{B}}
        \end{align*}
        \par
      \end{footnotesize}%
    }%
    \par
    \parbox{.5\linewidth}{%
      \begin{footnotesize}
        First 10 counterexamples for \RA9
        \begin{align*}
          \noninvdist{\Reln{B}}{\Reln{S:SW}} \\
          \noninvdist{\Reln{B}}{\Reln{SW}} \\
          \noninvdist{\Reln{B}}{\Reln{SE}} \\
          \noninvdist{\Reln{B}}{\Reln{NW}} \\
          \noninvdist{\Reln{B}}{\Reln{NE}} \\
          \noninvdist{\Reln{B}}{\Reln{W}} \\
          \noninvdist{\Reln{B}}{\Reln{E}} \\
          \noninvdist{\Reln{B}}{\Reln{N}} \\
          \noninvdist{\Reln{B}}{\Reln{S}} \\
          \noninvdist{\Reln{B}}{\Reln{B}}
        \end{align*}
        \par
      \end{footnotesize}%
    }%
    \parbox{.5\linewidth}{%
      \begin{footnotesize}
        First 10 counterexamples for \RA{10}
        \begin{align*}
          \nontarskidm{\Reln{B}}{\Reln{S:SW}} \\
          \nontarskidm{\Reln{B}}{\Reln{SW}} \\
          \nontarskidm{\Reln{B}}{\Reln{SE}} \\
          \nontarskidm{\Reln{B}}{\Reln{NW}} \\
          \nontarskidm{\Reln{B}}{\Reln{NE}} \\
          \nontarskidm{\Reln{B}}{\Reln{W}} \\
          \nontarskidm{\Reln{B}}{\Reln{E}} \\
          \nontarskidm{\Reln{B}}{\Reln{N}} \\
          \nontarskidm{\Reln{B}}{\Reln{S}} \\
          \nontarskidm{\Reln{B}}{\Reln{B}}
        \end{align*}
        \par
      \end{footnotesize}%
    }%

  \item[STAR{\boldmath${}_4$}]
    ~\par
    \begin{itemize}
      \item
        \Hets: all axioms are satisfied.
      \item 
        \SparQ: all 6 tests are passed.
    \end{itemize}
\end{description}

\end{document}